\documentclass{fairmeta}

\usepackage{amsmath,amsfonts,bm}

\def\eqref#1{equation~\ref{#1}}

\def\1{\bm{1}}

\DeclareMathAlphabet{\mathsfit}{\encodingdefault}{\sfdefault}{m}{sl}
\SetMathAlphabet{\mathsfit}{bold}{\encodingdefault}{\sfdefault}{bx}{n}

\usepackage{amsthm}
\newtheorem*{remark}{Remark}
\usepackage{xspace}
\usepackage{url}
\usepackage{subcaption}
\usepackage[inline]{enumitem} 
\usepackage[most]{tcolorbox}
\usepackage{booktabs,tabularx}
\usepackage{xcolor,colortbl}
\definecolor{promptbg}{HTML}{F6F8FA}
\definecolor{promptborder}{HTML}{D0D7DE}
\definecolor{prompttitlebg}{HTML}{EEF2F6}
\definecolor{opdgreen}{RGB}{28,120,70}
\definecolor{opdred}{RGB}{180,55,55}

\newcommand{\opdgain}[1]{%
    \makebox[6pt][l]{%
        \textsubscript{%
            \normalfont\scriptsize\textcolor{opdgreen}{+#1}%
        }%
    }%
}

\newcommand{\opdloss}[1]{%
    \makebox[0pt][l]{%
        \textsubscript{%
            \normalfont\scriptsize\textcolor{opdred}{\ensuremath{-}#1}%
        }%
    }%
}
\definecolor{oursblue}{RGB}{232,242,252}
\definecolor{scalegray}{RGB}{244,245,247}
\usepackage{algorithm}
\usepackage{algpseudocode}
\usepackage{multirow}
\usepackage{caption}
\usepackage{graphicx}
\usepackage{bbm}
\usepackage{wrapfig}
\usepackage{amssymb}
\usepackage{hyperref}

\usepackage[capitalize,nameinlink]{cleveref}
\title{
REVO: Rollout-Efficient Off-Policy Distillation via Variance-Guided Reuse
}

\author[1]{Yuxiao Yang}
\author[1]{Shangzhe Li}
\author[2]{Tianrun Yu}
\author[2]{Kaixiang Zhao}
\author[2]{Taylor W. Killian}
\author[1]{Weitong Zhang}
\affiliation[1]{University of North Carolina at Chapel Hill}
\affiliation[2]{Brigham Young University}
\newcommand{\methodname}{REVO\xspace}

\newtheorem{proposition}{Proposition}[section]

\newtheorem{theorem}{Theorem}
\newtheorem{lemma}[theorem]{Lemma}

\begin{document}

\abstract{
 On-policy distillation (OPD) trains language models using dense token-level teacher supervision on student-generated trajectories. However, its reliance on frequently refreshed student rollouts often incurs substantial generation cost. We introduce \methodname, an off-policy distillation framework that improves rollout efficiency by reusing each student rollout for multi-step learner updates. \methodname addresses prefix-level and current-token policy mismatch through stabilized prefix weighting and one-step resampling from the current student, which enables repeated updates without regenerating full trajectories. To prioritize informative token positions within reused rollouts, \methodname uses the variance of the student-teacher log-probability ratio to quantify the remaining token-level learning signal and guide repeated optimization. Across multiple student–teacher scales, \methodname with only 50 rollout iterations matches or exceeds OPD baselines trained for 200 iterations on both in-domain and cross-domain reasoning benchmarks.
 \code{https://github.com/UNCSciML/REVO}
}
\maketitle


\section{Introduction}
Learning from self-generated rollouts has emerged as a promising approach to language model post-training. Among these approaches, on-policy distillation (OPD)~\citep{gkd,opd2,opd3,opd4} leverages dense token-level supervision by querying a teacher model on student-generated prefixes. In particular, OPD aligns the student's next-token distribution with the teacher's at these prefixes by minimizing the token-level reverse-KL objective $\mathbb E_{a\sim P_\theta(\cdot\mid c_t)}[\log\frac{P_\theta(a\mid c_t)}{P_T(a\mid c_t)}]$, where $c_t$ denotes a student-generated prefix and $P_\theta$ and $P_T$ denote the student and teacher policies, respectively~\citep{opd2,rethinkopd}.

Despite this favorable optimization structure, OPD remains expensive because of the cost of collecting student generations. Although it often reaches strong performance within a few hundred iterations, each iteration requires a fresh batch of student rollouts, so even a relatively short training run can involve thousands of full autoregressive generations. Prior work has shown that rollout generation
can dominate the wall-clock cost of OPD training \citep{lessismore}. Reusing trajectories through \emph{off-policy optimization} therefore provides a natural way to amortize this cost by performing multiple learner updates on existing student-generated contexts. However, effective off-policy distillation introduces challenges in both training stability and the quality of the learning signal extracted from reused data, as detailed below.

First, as the student evolves during training, the distributions of stored prefixes and their sampled tokens can diverge from those induced by the current policy, biasing naive data reuse. We distinguish two sources of this distribution shift: \emph{prefix mismatch} and \emph{current-token mismatch}. Correcting prefix mismatch with standard importance sampling requires multiplying token likelihood ratios along the prefix, which can produce highly variable weights and unstable updates for long trajectories. Current-token mismatch arises from retaining tokens sampled by an earlier student~\citep{ctpo,asyc-opd}. Unlike the prefix, these tokens can be resampled from the current student at the stored prefix without regenerating the trajectory.

Second, rollout reuse makes the strength of the remaining distillation signal particularly important. As observed by \citet{EOPD} and \citet{FiRe}, the strength of this signal varies across token positions. Repeatedly revisiting the same prefixes can therefore devote additional computation to positions with little remaining learning signal. Effective off-policy updates should prioritize informative token positions while limiting the influence of sampling noise.

\begin{figure}[t]
\centering
\includegraphics[
    width=\linewidth
]{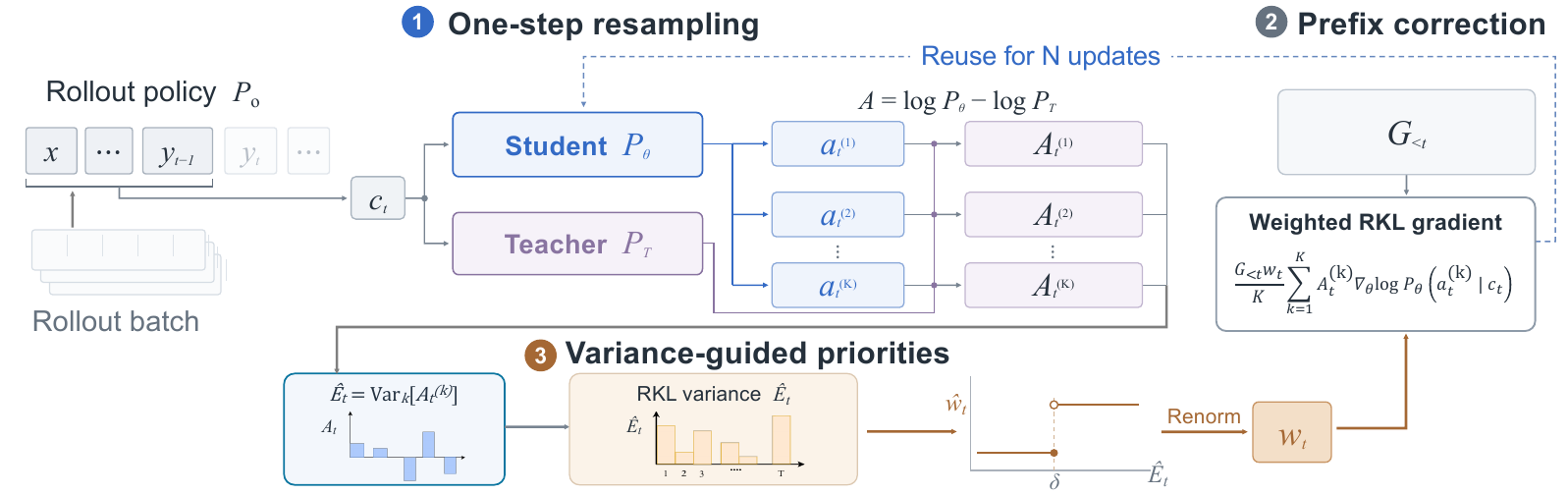}
\caption{
\textbf{Overview of \methodname.}
One-step resampling refreshes token-level teacher supervision at stored
prefixes, while stabilized prefix weights and variance-guided token
priorities support repeated updates.
}
\vspace{-2em}
\label{fig:overview_method}
\label{fig:overview}
\end{figure}

\begin{wrapfigure}{r}{0.46\textwidth}
\centering
\includegraphics[
    width=\linewidth
]{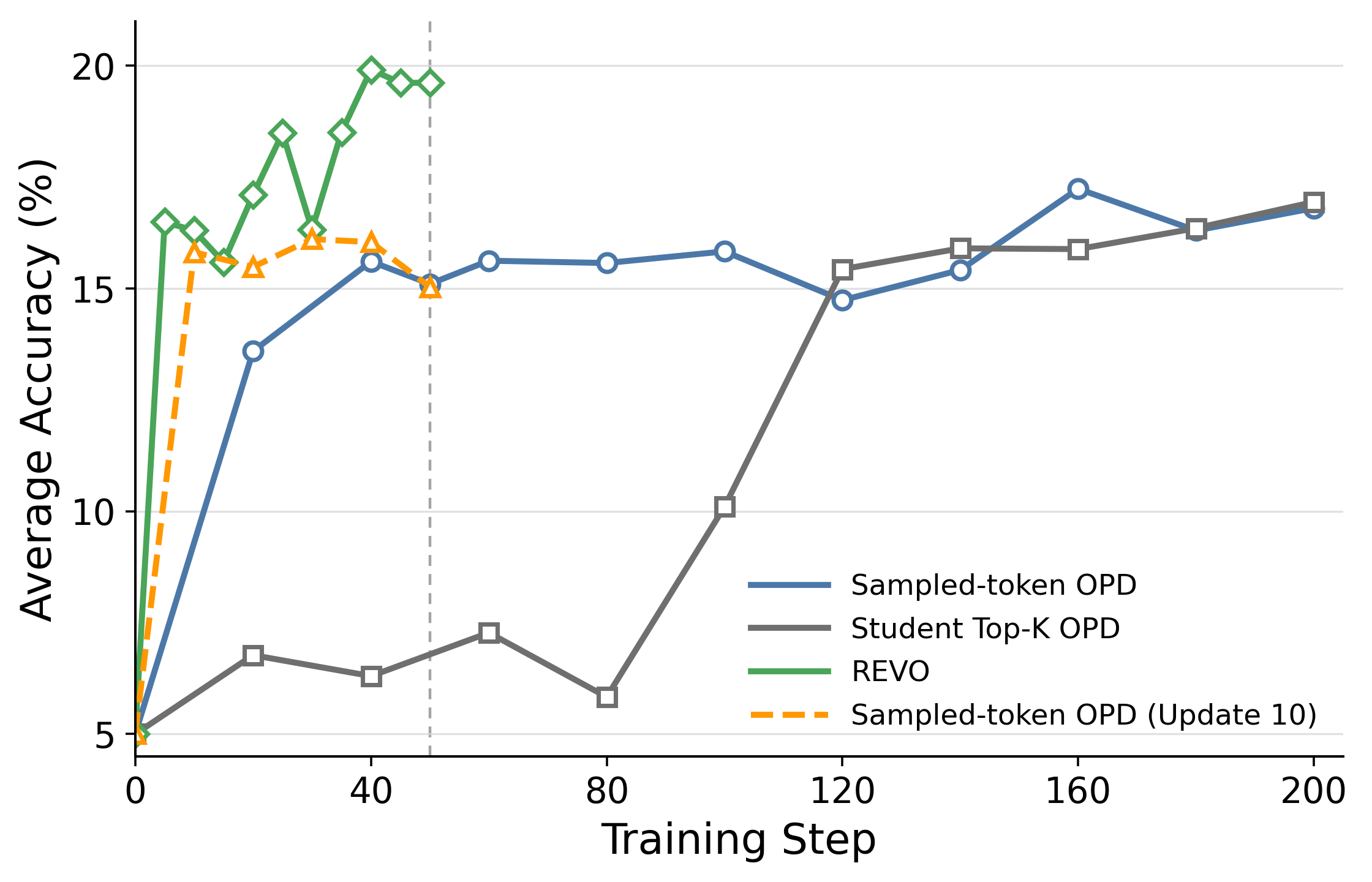}
\caption{
\textbf{Rollout efficiency of \methodname.}
For Qwen3-1.7B Base, \methodname at 50 training steps surpasses standard
OPD at 200 steps, measured by mean Avg@16 over AIME24, AIME25, and AMC23.
}
\label{fig:main_method_comparison}
\end{wrapfigure}
Motivated by these challenges, we propose \methodname, a rollout-efficient off-policy distillation framework that performs multiple learner updates on each student rollout under dense token-level teacher supervision, as outlined in \Cref{fig:overview}. \methodname combines two complementary components: stabilizing updates on historical student rollouts and adaptively prioritizing their remaining distillation signal. Specifically, \methodname introduces \emph{one-step resampling} to draw fresh tokens from the current student at stored prefixes, removing the need for conditional current-token importance weighting. It further stabilizes off-policy updates with a geometrically normalized, capped prefix weight~\citep{gspo}. In parallel, \methodname leverages the same resampled tokens for \emph{variance-guided token prioritization}. We estimate the variance of the student and teacher log probability ratio and use it to assign greater weight to positions with stronger remaining
learning signal and more reliable finite sample gradient estimates. Under a local realizability condition, we show that this variance equals the inner product between the ordinary and natural gradients of the local reverse-KL objective, connecting it to first-order natural-gradient progress. Empirically, this statistic is strongly associated with the signal-to-noise ratio of finite-sample logit-gradient estimates. Together, these components enable current-policy token supervision at reused off-policy prefixes and couple the refreshed supervision with adaptive prioritization, without additional model evaluations for variance estimation. Our contributions are threefold:
\begin{itemize}[leftmargin=*]
\item We develop a rollout-reuse framework that separates prefix mismatch from current-token mismatch. Specifically, we propose one-step resampling, which draws fresh tokens to mitigate the distributional shift, while stabilized prefix weighting supports repeated updates on stored trajectories.

\item We introduce bounded token reweighting based on the local relative-entropy variance, which describes the variance of the log-probability ratio between the student and teacher models. A Fisher-geometric analysis connects this statistic to local natural-gradient progress, and finite-sample diagnostics support its use for prioritizing informative token-level updates during rollout reuse.

\item Empirically, \methodname trained with only 50 rollout iterations matches or exceeds OPD baselines trained for 200 iterations across three student--teacher scales and a range of in-domain and cross-domain reasoning benchmarks; ablations further isolate the contribution of each component.
\end{itemize}

\section{Related Work}

\subsection{Knowledge Distillation and On-Policy Distillation}

Classical knowledge distillation trains a student on fixed data using supervision from a stronger teacher \citep{kd,gkd}. On-policy distillation instead queries the teacher on student-generated prefixes, providing dense supervision on states visited by the current student policy \citep{gkd,kimi,deepseekv4,qwen3}. Recent OPD methods commonly
optimize token-level reverse KL \citep{opd2,rethinkopd}. MiniLLM
\citep{opd3} studies the corresponding sequence-level objective, while other methods use forward KL \citep{opsd,EOPD}. AsyncOPD
\citep{asyc-opd} studies policy staleness in asynchronous rollout and learner pipelines, with a focus on systems throughput. Recently, using the model itself as a teacher with privileged information has also emerged as a promising direction \citep{opsd,rlsd,ogls-sd}. However, performing multiple off-policy learner updates on each rollout batch has received less attention in OPD.

\subsection{Off-Policy Optimization in RLVR}
LLM reinforcement learning is predominantly on-policy, with each round of optimization using fresh rollouts from the current or a recent policy. This keeps the training distribution well aligned but makes generation a major cost. Recent work shows that stale trajectories can be reused when the resulting distribution shift is properly controlled \citep{llmreplay,m2po}. Token-level importance ratios correct only the sampled action at a fixed prefix, while exact trajectory-level ratios account for the full distribution shift but can suffer from rapidly
growing variance with sequence length. This has motivated stabilized alternatives such as sequence-level geometric ratios \citep{gspo} and cumulative token importance sampling \citep{ctpo}. These methods are developed for RLVR, where supervision is tied to the reward of the stored trajectory and optimization remains anchored to its sampled actions. Distillation offers additional flexibility because the teacher
can score arbitrary actions at a stored prefix. We exploit this property by resampling the current token from the current student and retaining only a prefix-level correction.

\subsection{Learning-Signal Prioritization}
\label{sec:related_prioritization}

A broad line of reinforcement learning research allocates training effort according to the estimated learning value of individual samples or tasks. Prioritized experience replay uses temporal-difference error as a proxy for learning progress \citep{per,laber}, while curriculum and level replay methods prioritize tasks based on estimated learning
potential or progress \citep{plr,alpgmm}.

Related ideas have recently appeared in RLVR for language models. DAPO \citep{dapo} filters prompt groups with zero reward variance, while Dr.~GRPO \citep{Dr.grpo} removes within-group reward standard
deviation normalization. LILO \citep{LILO} more explicitly characterizes within-prompt reward variance as \emph{learnability}. For binary rewards, this quantity reduces to $p(1-p)$ and becomes small for prompts that are consistently solved or consistently failed.

We consider learning-signal prioritization at the token level under reverse-KL distillation. The remaining distillation signal can vary substantially across token positions, especially when stored prefixes
are reused for multiple learner updates. We use relative-entropy variance, which we refer to as \emph{RKL variance}, to quantify this token-level signal and prioritize positions during rollout reuse. We develop its optimization interpretation in \Cref{sec:local_rkl_variance}.
\section{Preliminaries}
\label{sec:preliminary}

\paragraph{On-policy distillation.}
Let $c_t=(x,y_{<t})$ denote the context at position $t$, and let
$P_\theta(\cdot\mid c_t)$ and $P_T(\cdot\mid c_t)$ denote the student
and teacher next-token distributions. Token-level on-policy
distillation minimizes the reverse KL divergence
$\mathcal{L}_{\mathrm{OPD}}(c_t)
= D_{\mathrm{KL}}\!\left(
P_\theta(\cdot\mid c_t)\|P_T(\cdot\mid c_t)
\right)$.
Define the token-level reverse-KL signal as
$A_\theta(c_t,a)
= \log \frac{P_\theta(a\mid c_t)}{P_T(a\mid c_t)}$.
When the context is clear, we abbreviate $A_t(a):=A_\theta(c_t,a)$.
The local gradient is
\[
\nabla_\theta \mathcal{L}_{\mathrm{OPD}}(c_t)
=
\mathbb{E}_{a\sim P_\theta(\cdot\mid c_t)}
\left[
A_\theta(c_t,a)
\nabla_\theta\log P_\theta(a\mid c_t)
\right],
\]
where the additional score-function term vanishes in expectation.

Computing this expectation over the full vocabulary is expensive for
large language models. Sampled-token OPD \citep{opd2} instead uses the
rollout token $a_t=y_t\sim P_\theta(\cdot\mid c_t)$ at each visited context,
yielding an unbiased gradient estimator. We use this formulation
throughout our experiments. Top-$k$ approximations
\citep{rethinkopd} restrict the expectation to high-probability tokens
and introduce truncation bias. In our experiments, Top-$k$ OPD does not
outperform sampled-token OPD
(\Cref{fig:main_method_comparison}), consistent with recent empirical
findings \citep{kimi}.
\paragraph{Off-policy reverse-KL distillation.}
Under rollout reuse, trajectories are generated by a rollout policy
$P_o$, while optimization is performed under the current student
$P_\theta$. Let $d_o(c_t)$ and $d_\theta(c_t)$ denote the prefix
distributions induced by the two policies. Let $g_t^{\mathrm{on}}$
denote the expected sampled-token OPD gradient under
$c_t\sim d_\theta$ and $a_t\sim P_\theta(\cdot\mid c_t)$.

To express this gradient using trajectories generated by $P_o$, define
the cumulative importance ratio
\begin{equation}
W_t
=
\frac{
d_\theta(c_t)P_\theta(a_t\mid c_t)
}{
d_o(c_t)P_o(a_t\mid c_t)
}
=
\prod_{j\le t}
\frac{
P_\theta(a_j\mid c_j)
}{
P_o(a_j\mid c_j)
},
\label{eq:cumulative_is_ratio}
\end{equation}
where the prompt distribution is shared by the two policies. A change
of measure gives
\[
g_t^{\mathrm{on}}
=
\mathbb{E}_{
c_t\sim d_o,\,
a_t\sim P_o(\cdot\mid c_t)}
\left[
W_t A_\theta(c_t,a_t)
\nabla_\theta\log P_\theta(a_t\mid c_t)
\right].
\]

The exact correction $W_t$ multiplies likelihood ratios along the trajectory, which can produce highly variable weights for long sequences. We address this instability in the next section.
\newcommand{\EOS}{\ensuremath{\mathrm{EOS}}}
\newcommand{\A}{\mathcal A}
\newcommand{\Vocab}{\mathcal V}
\newcommand{\kl}{\operatorname{kl}}
\newcommand{\OPD}{\operatorname{OPD}}
\newcommand{\Pp}{\mathbb P}
\newcommand{\sg}{\operatorname{sg}}
\newcommand{\logit}{\operatorname{logit}}
\newcommand{\bV}{\widehat V}
\newcommand{\bC}{\widehat C}
\newcommand{\bpi}{\widehat\pi}
\newcommand{\dagstate}{\dagger}
\section{Method}









\subsection{Stabilizing Off-Policy RKL Distillation}
\label{sec:offpolicy_stabilization}

Directly applying the cumulative importance ratio in
\Cref{eq:cumulative_is_ratio} is poorly suited to long LLM trajectories. Writing $W_t=W_{<t}r_t$, where
$W_{<t}=d_\theta(c_t)/d_o(c_t)$ corrects the stored prefix distribution and $r_t=P_\theta(a_t\mid c_t)/P_o(a_t\mid c_t)$ corrects the current-token
distribution, reveals two distinct sources of mismatch. We treat these two terms differently. The current token can be resampled from the latest student policy at a stored prefix, while refreshing the prefix
itself would require generating a new trajectory. We therefore remove the current-token ratio through resampling and replace the cumulative prefix ratio with a stabilized surrogate.

\paragraph{One-step resampling from the current policy.}
Conditioned on a stored prefix $c_t$, the current-token importance ratio can be removed exactly by a change of measure:
\begin{align}
&
\mathbb{E}_{a_t\sim P_o(\cdot\mid c_t)}
\left[
r_t
A_t(a_t)
\nabla_\theta \log P_\theta(a_t\mid c_t)
\right] =
\mathbb{E}_{a_t\sim P_\theta(\cdot\mid c_t)}
\left[
A_t(a_t)
\nabla_\theta \log P_\theta(a_t\mid c_t)
\right].
\label{eq:onestep_change_measure}
\end{align}
We therefore draw
$a_t^{(1)},\ldots,a_t^{(K)}\overset{\mathrm{i.i.d.}}{\sim}
P_\theta(\cdot\mid c_t)$ at every learner update and use their average to obtain an unbiased estimate of the conditional reverse-KL gradient. We write $A_t^{(k)}:=A_t(a_t^{(k)})$ for the reverse-KL signal of the $k$-th candidate. This requires no additional autoregressive rollout because the stored
prefix remains unchanged. The teacher can score all resampled tokens at the same prefix.

\paragraph{Stabilizing the prefix correction.}
The prefix mismatch cannot be removed in the same way, since sampling a
new prefix from the current student would require regenerating the
preceding trajectory. The exact prefix ratio accumulates token
likelihood ratios along the stored prefix and can become highly variable
for long contexts. We therefore replace it with a geometrically
normalized correction, motivated by the stabilization used in
\citet{gspo}:
\begin{equation}
G_{<t}
=
\min\left\{
\exp\left(
\tfrac{1}{t-1}
{\textstyle\sum_{i<t}}
\log
\tfrac{P_\theta(a_i\mid c_i)}
{P_o(a_i\mid c_i)}
\right),
C
\right\},
\label{eq:gspo_prefix}
\end{equation}
with $G_{<1}=1$, where $C$ bounds the correction to keep updates
stable. Below the cap, $G_{<t}=W_{<t}^{1/(t-1)}$ reflects the
average policy shift along the stored prefix instead of growing
multiplicatively with prefix length, giving a stabilized surrogate
rather than an exact importance correction. Unlike GSPO,
which constructs a ratio over the full generated response, our weight
is prefix-dependent and only uses the policy mismatch accumulated
before position $t$.

\paragraph{Stabilized training objective.}
Combining one-step resampling with the stabilized prefix correction
gives the practical surrogate used by \methodname:
\begin{equation}
\widehat{\mathcal L}^{K,G}_t
=
\textstyle{\frac{1}{K}
\sum_{k=1}^{K}}
\operatorname{sg}\!\left[
G_{<t}
A_t^{(k)}
\right]
\log P_\theta(a_t^{(k)}\mid c_t),
\label{eq:samplek_loss_g}
\end{equation}
where
$a_t^{(k)}\sim P_\theta(\cdot\mid c_t)$.
The correction weight and reverse-KL signal are treated as constants in
the stop-gradient surrogate. This objective removes direct importance
weighting of the current token while retaining a stable correction for
the mismatch of the reused prefix.


\subsection{Variance-Guided Token Prioritization}
\label{sec:local_rkl_variance}

In \Cref{sec:offpolicy_stabilization}, the current token is resampled at every learner update, but the stored prefixes remain fixed for all $N$ updates. Refreshing them would require new rollouts, which is the cost reuse is intended to avoid. The prefix weight $G_{<t}$ stabilizes updates under policy shift, but it does not indicate how much the student can still learn from the teacher at each prefix. A prefix can be fully on-policy yet provide little learning signal if the student already matches the teacher there. Given the uneven learning signals across token positions~\citep{EOPD,FiRe}, repeatedly revisiting the same prefixes can spend additional computation on positions with little left to contribute. We therefore complement off-policy stabilization with adaptive prioritization of the remaining token-level learning signal.

\paragraph{RKL variance and the usefulness of reused supervision.}
For a stored prefix $c_t$, let $p_t(a)=P_\theta(a\mid c_t)$ and $q_t(a)=P_T(a\mid c_t)$, so that $A_t(a)=\log\frac{p_t(a)}{q_t(a)}$. We measure its variation across candidate next tokens using the \emph{local relative-entropy variance}, $E_t=\operatorname{Var}_{a\sim p_t}[A_t(a)]$, which we call the \emph{RKL variance}. This statistic depends on both the teacher and the current student and can be estimated from the same tokens used for one-step resampling.

\begin{wrapfigure}[17]{r}{0.42\textwidth}
    \centering
    \includegraphics[width=\linewidth]{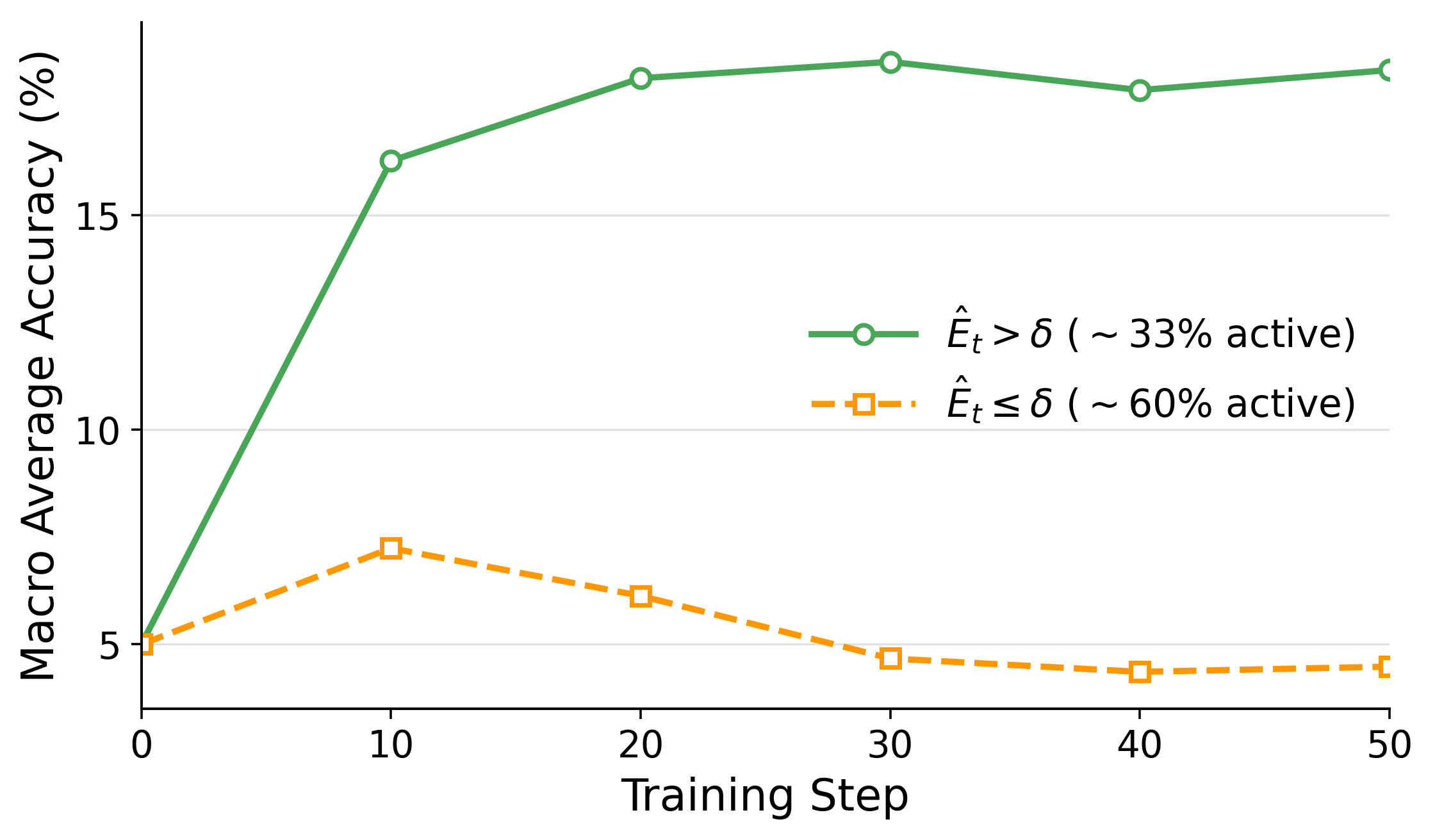}
    \caption{High- versus low-RKL-variance updates under \methodname{} for Qwen3-1.7B Base$\rightarrow$4B ($\delta=0.005$). Accuracy is mean Avg@16 on AIME24, AIME25, and AMC23.}
    \label{fig:energy_threshold_ablation}
\end{wrapfigure}
We first examine whether RKL variance distinguishes token positions that contribute useful supervision during rollout reuse. \Cref{fig:energy_threshold_ablation} compares training with updates restricted to positions above or below a fixed RKL-variance threshold. The high-variance subset yields sustained performance gains, whereas the low-variance subset provides little  improvement. This separation supports using RKL variance to prioritize positions during rollout reuse. Our practical rule retains both groups with positive weights. We next examine the reliability of the corresponding gradient estimates.

\paragraph{RKL variance as a proxy for gradient reliability.} Let $z_t=z_\theta(c_t)$ denote the student logits at $c_t$, let $u_t=\nabla_{z_t}D_{\mathrm{KL}}(p_t\|q_t)$ be the exact logit gradient, and let $\widehat u_t$ be its unbiased $K$-sample estimate from one-step resampling. We define the gradient signal-to-noise ratio (SNR) as $\Gamma_t=\|u_t\|_2^2/\sigma_t^2$, where $\sigma_t^2=\mathbb E\|\widehat u_t-u_t\|_2^2$. The two variances measure different quantities: $E_t$ captures log-ratio variation across actions, whereas $\sigma_t^2$ measures sampling noise in the gradient estimator.

\Cref{fig:snr-vs-et-main} shows a strong positive rank correlation of $0.93$ between exact $E_t$ and the SNR of the $K=16$ resampling estimator across 1280 stored prefixes. Higher RKL-variance positions are associated with more reliable logit-gradient estimates in this diagnostic, consistent with their greater training utility in \Cref{fig:energy_threshold_ablation}. These quantities are computed offline from full-vocabulary student and teacher distributions (\Cref{app:finite_sample_reliability}). This association motivates using RKL variance as a proxy for relative gradient reliability without computing full-vocabulary gradient moments during training.

\begin{wrapfigure}[20]{r}{0.4\textwidth}
    \vspace{-2em}
    \centering
    \includegraphics[width=\linewidth]{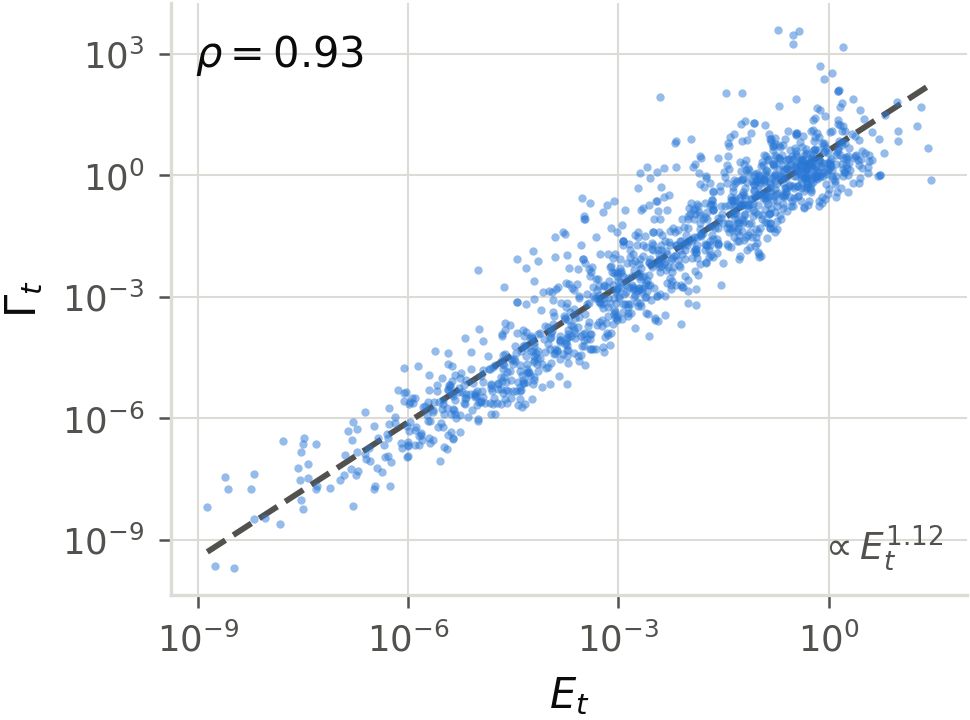}
    \vspace{-2em}
    \caption{
    RKL variance and gradient reliability across 1,280 stored prefixes. Exact $E_t$ is strongly rank-correlated with the logit-space gradient SNR of the $K=16$ resampling estimator, with Spearman's
    $\rho=0.93$.
    }
    \label{fig:snr-vs-et-main}
\end{wrapfigure}
\paragraph{Variance-aware weighted reverse-KL minimization.}
Gradient reliability provides a principle for allocating update weights. The following proposition characterizes the optimal deterministic scaling of the logit-gradient estimator $\widehat u_t$ under a smoothness-based bound.

\begin{proposition}[SNR-guided local update scaling]
\label{prop:reliability_weighting}
Fix a stored prefix $c_t$ and consider the local reverse-KL objective
as a function of the student logits,
\[
\mathcal L_t(z_t)
=
D_{\mathrm{KL}}(\operatorname{softmax}(z_t)\|q_t).
\]
Let $\beta>0$ be a common smoothness upper bound over the stored
prefixes under consideration, and suppose $\mathcal L_t$ is
$\beta$-smooth.
Let $u_t=\nabla_{z_t}\mathcal L_t(z_t)$ and let $\widehat u_t$ be an
unbiased estimator with
$\sigma_t^2=\mathbb E\|\widehat u_t-u_t\|_2^2\in(0,\infty)$.
Define
\[
\Gamma_t=\frac{\|u_t\|_2^2}{\sigma_t^2}.
\]
For any step size $\eta>0$ and deterministic weight $w\geq0$,
\begin{align*}
\mathbb E\big[\mathcal L_t(z_t-\eta w\widehat u_t)\big]
&\leq
\mathcal L_t(z_t)-\eta w\|u_t\|_2^2 +
\frac{\beta\eta^2w^2}{2}
\big(\|u_t\|_2^2+\sigma_t^2\big).
\end{align*}
The right-hand side is uniquely minimized over $w\geq0$ at
\begin{equation}
w_t^\star
=
\frac{1}{\beta\eta}
\frac{\Gamma_t}{1+\Gamma_t}.
\label{eq:oracle_snr_weight}
\end{equation}
For fixed $\beta$ and $\eta$, $w_t^\star$ increases monotonically
with $\Gamma_t$ and approaches $1/(\beta\eta)$ as
$\Gamma_t\to\infty$.
\end{proposition}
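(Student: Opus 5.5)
The argument uses the standard descent lemma for $\beta$-smooth functions, followed by a one-variable quadratic minimization. Since $\mathcal L_t$ is $\beta$-smooth in the logits, for any displacement $\Delta$ we have
\[
\mathcal L_t(z_t+\Delta)\le \mathcal L_t(z_t)+\langle u_t,\Delta\rangle+\tfrac{\beta}{2}\|\Delta\|_2^2 .
\]
We apply this pointwise, that is, for every realization of the estimator, with $\Delta=-\eta w\widehat u_t$. Because $w$ and $\eta$ are deterministic and $z_t$ is fixed given the stored prefix, the only randomness is in $\widehat u_t$. This gives
\[
\mathcal L_t(z_t-\eta w\widehat u_t)\le \mathcal L_t(z_t)-\eta w\langle u_t,\widehat u_t\rangle+\tfrac{\beta\eta^2w^2}{2}\|\widehat u_t\|_2^2 .
\]

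Next we take expectations term by term.

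\emph{Linear term.} Unbiasedness, $\mathbb E[\widehat u_t]=u_t$, gives $\mathbb E\langle u_t,\widehat u_t\rangle=\|u_t\|_2^2$.

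\emph{Quadratic term.} Write $\widehat u_t=u_t+(\widehat u_t-u_t)$ and expand. The cross term has zero mean, again by unbiasedness, so
\[
\mathbb E\|\widehat u_t\|_2^2=\|u_t\|_2^2+\sigma_t^2 .
\]
Finiteness of $\sigma_t^2$ guarantees that all these expectations exist. Substituting both identities yields exactly the displayed inequality.

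\emph{Optimal weight.} The right-hand side, viewed as a function of $w$, is
\[
f(w)=\mathcal L_t(z_t)-\eta\|u_t\|_2^2\,w+\tfrac{\beta\eta^2}{2}\bigl(\|u_t\|_2^2+\sigma_t^2\bigr)w^2 .
\]
The quadratic coefficient is strictly positive because $\beta,\eta>0$ and $\sigma_t^2>0$. So $f$ is strictly convex and has a unique unconstrained minimizer at
\[
w=\frac{\|u_t\|_2^2}{\beta\eta(\|u_t\|_2^2+\sigma_t^2)} .
\]
This point is nonnegative, so it is also the unique minimizer over $w\ge 0$. Dividing numerator and denominator by $\sigma_t^2$ gives $w_t^\star=\frac{1}{\beta\eta}\frac{\Gamma_t}{1+\Gamma_t}$. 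In the degenerate case $u_t=0$ we get $\Gamma_t=0$, and the formula correctly returns $w_t^\star=0$.

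\emph{Monotonicity and limit.} The map $\Gamma\mapsto \Gamma/(1+\Gamma)=1-1/(1+\Gamma)$ is strictly increasing on $[0,\infty)$ and tends to $1$ as $\Gamma\to\infty$. Hence $w_t^\star$ increases monotonically in $\Gamma_t$ and approaches $1/(\beta\eta)$.

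The proof is essentially routine. The only points needing care are applying the descent lemma pathwise before taking expectations, which is legitimate because $w$ is deterministic, and the bias–variance decomposition of $\mathbb E\|\widehat u_t\|_2^2$, where unbiasedness is used a second time. Uniqueness depends on $\sigma_t^2>0$, which the hypotheses supply.
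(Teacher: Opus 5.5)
Your proof is correct and follows essentially the same route as the paper: apply the $\beta$-smoothness descent inequality pathwise with $\Delta=-\eta w\widehat u_t$, take expectations using unbiasedness for both the linear term and the decomposition $\mathbb E\|\widehat u_t\|_2^2=\|u_t\|_2^2+\sigma_t^2$, then minimize the resulting strictly convex quadratic in $w$ and read off the monotone, saturating behavior of $\Gamma/(1+\Gamma)$. You also check explicitly two points the paper leaves implicit: that the stationary point is nonnegative, so it is also the constrained minimizer over $w\ge 0$, and that $u_t=0$ gives $w_t^\star=0$.
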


The proof is given in \Cref{app:proof_reliability_weighting}.
\Cref{prop:reliability_weighting} provides an oracle local criterion
under which the preferred scale increases with gradient reliability
and saturates at high SNR. Together with the strong rank association
between $E_t$ and $\Gamma_t$, this motivates using RKL variance to
construct bounded token priorities.

During training, we estimate $E_t$ from the $K$ resampled tokens using the unbiased sample variance,
\begin{equation}
    \widehat E_t^K = \tfrac{1}{K-1} \textstyle{\sum_{k=1}^{K}} \big(A_t^{(k)}- \tfrac{1}{K}\textstyle{\sum_{j=1}^{K}} A_t^{(j)}\big)^2.
\label{eq:sampleK_rkl_variance}
\end{equation}

These estimates span several orders of magnitude and exhibit a long right tail; finite sampling can also produce zero estimated variance at positions with nonzero underlying signal (\Cref{app:energy_distribution}). We therefore use a threshold $\delta\geq0$ to recover a coarse priority ordering with bounded weights for both groups:
\begin{equation}
\widehat w_t
=
\left|
\tau-\mathbbm{1}\left\{\widehat E_t^K\leq\delta\right\}
\right|,
\qquad
\tau\in[1/2,1).
\label{eq:rkl_variance_reweighting}
\end{equation}
Positions above the threshold receive weight $\tau$, while the remaining positions retain weight $1-\tau$. Normalizing these weights to have unit mean over all valid token positions in the stored batch gives $w_t$. For $\tau>1/2$, this rule emphasizes the higher-variance group while retaining supervision at every valid position.
When $\tau=1/2$, it recovers uniform weighting.

Combining these priorities with \Cref{eq:samplek_loss_g} yields the adaptive weighted surrogate
\begin{equation}
\widehat{\mathcal L}^{K,G,w}_t
=
\frac{1}{K}\sum_{k=1}^{K}
\operatorname{sg}\!\left[
G_{<t}\,w_t\,A_t^{(k)}
\right]
\log P_\theta(a_t^{(k)}\mid c_t).
\end{equation}
We average this surrogate over valid positions and treat the weights and reverse-KL signals as constants during differentiation. Samples and priorities are refreshed at every learner update, allowing the weighting to adapt to the evolving student without new trajectories or additional model evaluations for variance estimation. The complete procedure is given in \Cref{app:training_algorithm}. We further test continuous variants of this weighting, using
$\sqrt{\widehat E_t^K}$ and a saturating transformation of $\widehat E_t^K$ in place of the two-level rule. Both underperform the
two-level rule (\Cref{tab:ablation}), supporting this design choice.

\paragraph{Interpretation through local optimization dynamics.}
We finally examine what RKL variance measures in local optimization. At a fixed prefix, the score function has zero mean, so only the centered log-ratio $A_t-\mathbb E_{p_t}[A_t]$ contributes to the exact gradient. The following proposition relates this variation to progress under conditional Fisher geometry.

\begin{proposition}[Fisher-geometric interpretation of RKL variance]
\label{prop:local_fisher_learnability}
Fix a prefix $c_t$, and let $g_t=\nabla_\theta D_{\mathrm{KL}}(p_t\|q_t)$. Define $\psi_t(a)=\nabla_\theta\log p_t(a)$, the conditional Fisher matrix $F_t=\mathbb E_{a\sim p_t}[\psi_t(a)\psi_t(a)^\top]$, and the natural gradient $n_t=F_t^\dagger g_t$, where $\dagger$ denotes the Moore--Penrose pseudoinverse. If the model can locally realize the centered log-ratio correction at $c_t$, then

\begin{equation*}
\langle g_t,n_t\rangle
=
g_t^\top F_t^\dagger g_t
=
E_t.
\end{equation*}

The precise realizability condition and proof are given in
\Cref{app:opd_npg_identity}.
\end{proposition}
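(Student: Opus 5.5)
The plan is to write the gradient and the Fisher matrix in terms of the score vectors $\psi_t(a)$, and then use realizability to express $g_t$ as $F_t$ applied to a specific parameter direction. Write $\bar A_t=\mathbb E_{p_t}[A_t]$ and $\tilde A_t(a)=A_t(a)-\bar A_t$. Since $\mathbb E_{p_t}[\psi_t]=0$, the gradient identity from \Cref{sec:preliminary} gives
\[
g_t=\mathbb E_{a\sim p_t}\big[\tilde A_t(a)\,\psi_t(a)\big].
\]
The realizability condition I would adopt is the existence of a parameter direction $v_t$ with $\psi_t(a)^\top v_t=\tilde A_t(a)$ for $p_t$-almost every $a$. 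In words, a first-order parameter perturbation shifts the log-probabilities exactly by the centered log-ratio. I would state this as the precise hypothesis.

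Under this condition the key step is direct:
\[
F_t v_t=\mathbb E_{p_t}\big[\psi_t\psi_t^\top v_t\big]=\mathbb E_{p_t}\big[\psi_t\tilde A_t\big]=g_t .
\]
So $g_t$ lies in the range of $F_t$. Because $F_t$ is symmetric positive semidefinite, $F_tF_t^\dagger F_t=F_t$. This gives
\[
g_t^\top F_t^\dagger g_t=v_t^\top F_tF_t^\dagger F_t v_t=v_t^\top F_t v_t=\mathbb E_{p_t}\big[(\psi_t^\top v_t)^2\big]=\mathbb E_{p_t}\big[\tilde A_t^2\big]=E_t .
\]
The equality $\langle g_t,n_t\rangle=g_t^\top F_t^\dagger g_t$ is just the definition $n_t=F_t^\dagger g_t$.

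The value does not depend on which realizing $v_t$ is chosen. Any two realizers differ by an element of the null space of $F_t$: their difference $d$ satisfies $\psi_t^\top d=0$ almost surely, hence $F_t d=0$. I expect the main obstacle to be formulating realizability cleanly rather than the algebra itself. The condition has to hold on the support of $p_t$ with a finite vocabulary and finite moments, so that the Fisher expectations are finite sums. It should also be justified as a local, first-order statement about the parameterization.

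As a sanity check, I would note that if the parameters include the logits $z_t$ directly, the condition always holds. The score is then $e_a-p_t$, and taking $v=\tilde A_t$ (a vector over the vocabulary) gives $\psi_t(a)^\top v=\tilde A_t(a)-\mathbb E_{p_t}[\tilde A_t]=\tilde A_t(a)$. So the proposition holds exactly in logit space, consistent with the logit-space analysis of \Cref{prop:reliability_weighting}.
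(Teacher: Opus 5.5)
Your proof is correct, and it takes a more direct route than the paper.

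\textbf{The paper's route.} The paper works in logit coordinates. It writes $\psi_t(a)=J_t^\top(\mathbf e_a-p_t)$, where $J_t$ is the logit Jacobian. It then factors $g_t=J_t^\top G_tX_t$ and $F_t=J_t^\top G_tJ_t$, with $G_t=\operatorname{Diag}(p_t)-p_tp_t^\top$ and $X_t$ the centered log-ratio (your $\tilde A_t$). Finally it identifies $g_t^\top F_t^\dagger g_t$ with $\|P_{\operatorname{Range}(G_t^{1/2}J_t)}G_t^{1/2}X_t\|_2^2\le\|G_t^{1/2}X_t\|_2^2=E_t$. Equality holds if and only if $G_t^{1/2}X_t\in\operatorname{Range}(G_t^{1/2}J_t)$.

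\textbf{Your condition is the paper's condition.} Since $\psi_t(a)^\top v=(J_tv)_a-p_t^\top J_tv$, your hypothesis says $J_tv-X_t\in\operatorname{span}(\mathbf 1)$. Under full support, $\operatorname{span}(\mathbf 1)$ is exactly the null space of $G_t^{1/2}$, so the two conditions coincide.

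\textbf{What each route buys.} Your argument ($g_t=F_tv_t$, then $F_tF_t^\dagger F_t=F_t$) bypasses the Jacobian factorization and the projection lemma. It never uses the softmax structure, so it holds for any parametric family with a score. The paper's route buys the unconditional bound $\langle g_t,n_t\rangle\le E_t$ and the converse, that equality forces realizability (Proposition~\ref{prop:opd_npg_gradient_pairing}). You prove only sufficiency, which is all the stated proposition asks.

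\textbf{Getting the extras in your framework.} Both extras are available in your language. Take $v=F_t^\dagger g_t$. Then $g_t^\top v=v^\top F_tv=g_t^\top F_t^\dagger g_t$, using $F_t^\dagger F_tF_t^\dagger=F_t^\dagger$. Applying Cauchy--Schwarz to $g_t^\top v=\mathbb E_{p_t}[\tilde A_t\,\psi_t^\top v]$ gives $g_t^\top F_t^\dagger g_t\le\sqrt{E_t}\,\sqrt{g_t^\top F_t^\dagger g_t}$, which is the bound. In the equality case, the proportionality constant must satisfy $\lambda=\lambda^2$ with $\lambda E_t=E_t$. This forces $\psi_t^\top F_t^\dagger g_t=\tilde A_t$, so $F_t^\dagger g_t$ is itself a realizer.
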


Under this condition, $E_t$ gives the first-order decrease per unit step along the negative natural-gradient direction. Thus, $E_t$ measures the local optimization signal remaining at a prefix, while \Cref{fig:snr-vs-et-main} relates it to the reliability of the finite-sample gradient estimate. Together, these properties motivate refreshing variance-guided priorities throughout rollout reuse.


\let\preprintoriginalwraptable\wraptable
\let\endpreprintoriginalwraptable\endwraptable
\let\preprintoriginalwrapfigure\wrapfigure
\let\endpreprintoriginalwrapfigure\endwrapfigure
\newcommand{\preprintdiscardvspace}[1]{}
\RenewDocumentEnvironment{wraptable}{O{} m m +b}{%
  \gdef\preprintpairedtablewidth{#3}%
  \global\long\def\preprintpairedtablebody{#4}%
}{}
\RenewDocumentEnvironment{wrapfigure}{O{} m m +b}{%
  \begin{figure}[tb]
    \noindent
    \begin{minipage}[t]{\preprintpairedtablewidth}
      \vspace{0pt}\captionsetup{type=table}\let\vspace\preprintdiscardvspace
      \preprintpairedtablebody
    \end{minipage}\hfill
    \begin{minipage}[t]{#3}
      \vspace{0pt}\captionsetup{type=figure}\let\vspace\preprintdiscardvspace
      #4
    \end{minipage}
  \end{figure}
}{}
\section{Experiments}
\subsection{Experimental Setup}
\label{sec:exp_setup}
\paragraph{Models and benchmarks.}
We conduct experiments on the Qwen3 model family \citep{qwen3}. Specifically, we initialize Qwen3-0.6B-Base, Qwen3-1.7B-Base, and Qwen3-4B-Base as student models, and use Qwen3-1.7B, Qwen3-4B, and Qwen3-8B, respectively, as teachers. We use DAPO-Math-17K \citep{dapo} as the training dataset.

To evaluate the performance of \methodname, we primarily consider mathematical reasoning benchmarks, including AIME24, AIME25, AMC23, MATH-500, OlympiadBench, and Minerva \citep{aime24,aime25,maa_amc,minerva,olympiabench}. For smaller benchmarks, namely AIME24, AIME25, and AMC23, we report Avg@16, while for the larger benchmarks we report Avg@4.

We further evaluate all methods on benchmarks beyond mathematical reasoning to assess cross-domain generalization, including GPQA@4, HumanEval@4, and MMLU-Redux@1 \citep{gpqa,humaneval,mmlu}.
\begin{table*}[ht]
\centering
\caption{
Results across three student--teacher scales.
AIME24, AIME25, and AMC23 use Avg@16; MATH-500, OlympiadBench,
and Minerva use Avg@4.
Mean is the average across the six benchmarks. Best results within each scale are bold; blue shading highlights \methodname. Green/red subscripts indicate gains/losses in percentage points relative to OPD at Step~50 within the same scale.
}
\label{tab:main_scaling_results}

\begingroup
\small
\setlength{\tabcolsep}{3pt}
\renewcommand{\arraystretch}{1.02}
\setlength{\aboverulesep}{0.35ex}
\setlength{\belowrulesep}{0.35ex}

\begin{tabularx}{\textwidth}{
    lc
    *{3}{>{\centering\arraybackslash}X}
    cc
    *{2}{>{\centering\arraybackslash}X}
}
\toprule
& &
\multicolumn{3}{c}{\textbf{Avg@16}} &
\multicolumn{3}{c}{\textbf{Avg@4}} &
\\
\cmidrule(lr){3-5}
\cmidrule(lr){6-8}

\textbf{Method}
& \textbf{Step}
& \textbf{AIME24}
& \textbf{AIME25}
& \textbf{AMC23}
& \textbf{MATH-500}
& \textbf{OlympiadBench}
& \textbf{Minerva}
& \textbf{Mean}
\\
\midrule

\rowcolor{scalegray}
\multicolumn{9}{l}{
    \textbf{Student: 0.6B Base \quad Teacher: 1.7B}
} \\

Base & 0
& 0.0 & 0.0 & 1.4 & 3.3 & 1.7 & 2.3 & 1.5 \\

OPD & 50
& 0.8 & 1.0 & 18.4 & 45.2 & 14.8 & 11.0 & 15.2 \\

OPD & 200
& \textbf{2.1} & 1.0 & 23.3 & 48.4 & 17.9 & 10.0 & 17.1 \\

EOPD & 50
& 1.5 & 0.6 & 18.8 & 45.0 & 13.3 & 10.7 & 15.0 \\

EOPD & 200
& 1.9 & 0.6 & 24.7 & 47.4 & 17.6 & 11.8 & 17.3 \\

FiRe & 50
& 1.2 & 0.6 & 22.8 & 45.1 & 15.0 & 10.7 & 15.9 \\

FiRe & 200
& \textbf{2.1} & \textbf{1.2} & 26.2
& 47.6 & 17.5 & 10.7 & 17.6 \\

\rowcolor{oursblue}
\textbf{\methodname} & 50
& \textbf{2.1}\opdgain{1.3}
& 0.8\opdloss{0.2}
& \textbf{29.8}\opdgain{11.4}
& \textbf{49.0}\opdgain{3.8}
& \textbf{18.2}\opdgain{3.4}
& \textbf{12.5}\opdgain{1.5}
& \textbf{18.7}\opdgain{3.5}
\\

\midrule

\rowcolor{scalegray}
\multicolumn{9}{l}{
    \textbf{Student: 1.7B Base \quad Teacher: 4B}
} \\

Base & 0
& 0.4 & 1.3 & 13.8 & 18.8 & 5.5 & 6.7 & 7.8 \\

OPD & 50
& 6.5 & 4.2 & 34.7 & 64.3 & 26.7 & 17.4 & 25.6 \\

OPD & 200
& 8.3 & 6.5 & 35.6 & 66.9 & 28.8 & 17.4 & 27.3 \\

EOPD & 50
& 6.9 & 3.5 & 32.7 & 59.3 & 24.1 & 16.0 & 23.8 \\

EOPD & 200
& 9.2 & 5.2 & 36.7 & 67.3 & 29.2 & 15.3 & 27.2 \\

FiRe & 50
& 5.4 & 4.1 & 36.2 & 63.1 & 26.5 & 18.5 & 25.6 \\

FiRe & 200
& 7.2 & 5.6 & 35.0 & 66.6 & 28.4 & 16.9 & 26.6 \\

OPD(U10) & 50
& 7.5 & 4.7 & 32.8 & 62.6 & 27.1 & 17.3 & 25.3 \\

\rowcolor{oursblue}
\textbf{\methodname} & 50
& \textbf{10.0}\opdgain{3.5}
& \textbf{7.3}\opdgain{3.1}
& \textbf{41.6}\opdgain{6.9}
& \textbf{69.0}\opdgain{4.7}
& \textbf{29.7}\opdgain{3.0}
& \textbf{19.9}\opdgain{2.5}
& \textbf{29.6}\opdgain{4.0}
\\

\midrule

\rowcolor{scalegray}
\multicolumn{9}{l}{
    \textbf{Student: 4B Base \quad Teacher: 8B}
} \\

Base & 0
& 10.4 & 7.7 & 35.3 & 41.0 & 23.7 & 6.7 & 20.8 \\

OPD & 50
& 14.8 & 16.5 & 55.9 & 78.2 & 40.1 & 26.7 & 38.7 \\

OPD & 200
& 16.3 & 17.3 & 57.8 & 78.8 & 41.5 & 22.9 & 39.1 \\

EOPD & 50
& 16.3 & 14.2 & 53.9 & 78.5 & 41.1 & 24.9 & 38.2 \\

EOPD & 200
& 16.7 & 16.0 & 56.1 & 79.5 & 43.7 & 23.2 & 39.2 \\

FiRe & 50
& 14.7 & 13.5 & 51.7 & 77.4 & 41.0 & 26.4 & 37.5 \\

FiRe & 200
& \textbf{18.7} & 15.8 & 56.5
& 78.9 & 40.3 & 22.2 & 38.7 \\

\rowcolor{oursblue}
\textbf{\methodname} & 50
& 18.1\opdgain{3.3}
& \textbf{17.5}\opdgain{1.0}
& \textbf{58.0}\opdgain{2.1}
& \textbf{80.4}\opdgain{2.2}
& \textbf{45.1}\opdgain{5.0}
& \textbf{29.0}\opdgain{2.3}
& \textbf{41.4}\opdgain{2.7}
\\

\bottomrule
\end{tabularx}
\endgroup
\end{table*}
\paragraph{Baselines.}
We mainly compare \methodname against sampled-token OPD~\citep{opd2} (implemented following \citet{rethinkopd}), EOPD \citep{EOPD}, and FiRe \citep{FiRe}. We train \methodname for 50 rollout steps, performing 10 learner updates per step. We compare against the baselines under both the same rollout budget (50 steps) and a larger training budget of 200 steps.

\paragraph{Implementation details.}
Each rollout step uses 8 prompts with 4 responses per prompt. We preserve the original ordering of the training data without shuffling. For the
geometric-mean prefix correction, we set the upper clipping bound to $C=4$, and use $K=16$ candidates for one-step resampling. For RKL-variance-guided reweighting, we set the threshold to $\delta=0.005$ and $\tau=0.75$.

We disable the teacher's thinking mode and use a maximum sequence length of 8192 tokens during training. For evaluation, we use a maximum
generation budget of 8192 tokens, with temperature $0.7$ and top-$p$ sampling with $p=0.95$. Detailed configurations are provided in \Cref{app:implementation_details}, and additional results and ablations in \Cref{app:more_results,app:experimental_details}.
\begin{table*}[t]
\centering
\caption{
Cross-domain evaluation across three student--teacher scales.
All experiments distill a larger post-trained teacher into a smaller
Qwen3 Base student.
\methodname is evaluated at Step 50, while OPD, EOPD, and FiRe are
evaluated at Step 200.
Mean is the unweighted average across the three benchmarks.
Best results within each scale are bold; blue shading highlights \methodname.
}
\label{tab:cross_domain}

\begingroup
\small
\setlength{\tabcolsep}{4.2pt}
\renewcommand{\arraystretch}{1.05}
\setlength{\aboverulesep}{0.35ex}
\setlength{\belowrulesep}{0.35ex}

\resizebox{\textwidth}{!}{
\begin{tabular}{
l
>{\columncolor{oursblue}}c c c c
>{\columncolor{oursblue}}c c c c
>{\columncolor{oursblue}}c c c c
}
\toprule
&
\multicolumn{4}{c}{\textbf{Student: 0.6B Base \quad Teacher: 1.7B}}
&
\multicolumn{4}{c}{\textbf{Student: 1.7B Base \quad Teacher: 4B}}
&
\multicolumn{4}{c}{\textbf{Student: 4B Base \quad Teacher: 8B}}
\\
\cmidrule(lr){2-5}
\cmidrule(lr){6-9}
\cmidrule(lr){10-13}

\textbf{Benchmark}
& \textbf{\methodname} & OPD & EOPD & FiRe
& \textbf{\methodname} & OPD & EOPD & FiRe
& \textbf{\methodname} & OPD & EOPD & FiRe
\\
\midrule

GPQA@4
& \textbf{20.2} & 13.8 & 18.1 & 13.3
& \textbf{27.3} & 17.5 & 20.8 & 16.8
& \textbf{40.0} & 38.5 & 37.7 & 38.8
\\

HumanEval@4
& \textbf{32.6} & 30.8 & 29.7 & 31.6
& \textbf{54.3} & 51.2 & 52.3 & 49.2
& \textbf{77.9} & 75.8 & 77.0 & 75.2
\\

MMLU-Redux@1
& \textbf{48.0} & 46.9 & 46.9 & 45.0
& \textbf{65.2} & 63.2 & 63.8 & 62.1
& 79.6 & 79.0 & \textbf{80.1} & 79.5
\\

\midrule
\textbf{Mean}
& \textbf{33.6} & 30.5 & 31.6 & 29.9
& \textbf{48.9} & 44.0 & 45.6 & 42.7
& \textbf{65.8} & 64.4 & 64.9 & 64.5
\\

\bottomrule
\end{tabular}
}
\endgroup
\vspace{-1em}
\end{table*}
\subsection{Main Results}

We report the main results in
\Cref{tab:main_scaling_results,tab:cross_domain}. \methodname consistently
outperforms other OPD variants under the same rollout budget and surpasses
baselines trained with a $4\times$ larger budget. This pattern extends beyond
Qwen3, with \methodname at 50 rollout iterations outperforming 200-iteration
baselines on Llama~3.2 and Gemma~3
(\Cref{app:additional_model_families}). These results demonstrate improved
rollout efficiency. Wall-clock comparisons and larger-batch OPD results are
provided in \Cref{app:wall_clock,app:opd_batch_size}.

We further compare \methodname with a  multi-update baseline,
OPD(U10), which directly applies the standard OPD objective for 10 learner updates per rollout step.  This matches both the rollout budget and number of learner updates. On the 1.7B student with 50 rollout steps, OPD(U10) performs substantially worse than \methodname and provides no clear improvement over
standard OPD under the same rollout budget. This shows that the gains of \methodname do not come simply from more learner updates, highlighting the importance of accounting for off-policy reuse.
\subsection{Ablation Study}

We further isolate the contribution of each component in \Cref{tab:ablation}. Repeating learner updates on stored rollouts without off-policy correction provides essentially no improvement over standard OPD, with Avg@16 remaining at 15.0 compared with 15.1. We then compare two alternative corrections for the current token. PPO-style clipped importance sampling improves performance to 16.7, while one-step resampling performs better at 17.8. Starting from the one-step resampling variant, adding prefix correction further improves performance to 18.2.
\begin{wraptable}{l}{0.48\textwidth}
    \centering
    \caption{
    Ablation study using a Qwen3-1.7B Base student distilled from a
    Qwen3-4B teacher. Results are reported at Step 50 as Avg@16 over
    AIME24, AIME25, and AMC23. PPO-style clipped IS and one-step
    resampling are alternative current-token corrections. Prefix
    correction and RKL-variance weighting are added to the one-step
    resampling variant.
    }
    \label{tab:ablation}
    \small
    \setlength{\tabcolsep}{4.5pt}
    \renewcommand{\arraystretch}{1.05}
    \begin{tabular}{lc}
        \toprule
        \textbf{Method} & \textbf{Avg@16} \\
        \midrule
        OPD                                      & 15.1 \\
        OPD(U10)                                 & 15.0 \\
        OPD(U10) + PPO-Style Clipped IS          & 16.7 \\
        OPD(U10) + One-Step Resampling           & 17.8 \\
        \quad + Prefix Correction                & 18.2 \\
        \midrule
        \multicolumn{2}{l}{\textit{RKL-variance reweighting}} \\
        \quad + $\sqrt{\widehat E_t^K}$ weighting & 18.7 \\
        \quad + $\widehat E_t^K/(\widehat E_t^K+0.25)$ & 18.6 \\
        \quad + Two-Level Weighting (\methodname)& \textbf{19.6} \\
        \bottomrule
    \end{tabular}
    \vspace{-3em}
\end{wraptable}
\begin{wrapfigure}[14]{r}{0.46\textwidth}
    \centering
     \vspace{-1em}
    \includegraphics[width=\linewidth]{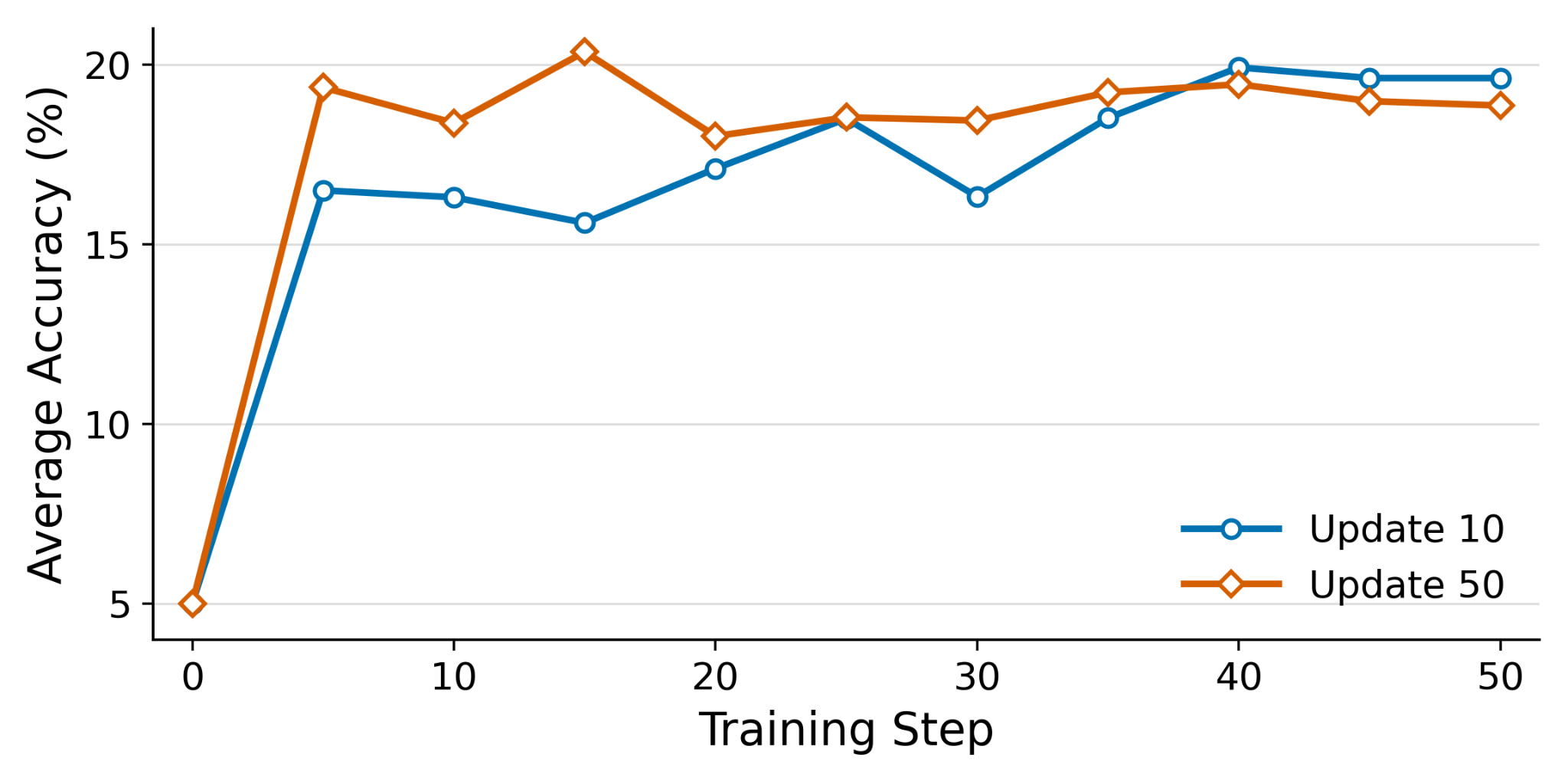}
     \vspace{-2em}
    \caption{
    Effect of the number of learner updates per rollout batch.
    Increasing the reuse rate accelerates early learning, while both
    configurations eventually reach similar performance.
    }
    \label{fig:update_num_comparison}
    \vspace{-3em}
\end{wrapfigure}
On top of one-step resampling and prefix correction, we compare three
RKL-variance weighting rules. The continuous $\sqrt{\widehat E_t^K}$
and saturating $\widehat E_t^K/(\widehat E_t^K+c)$ rules with $c=0.25$
achieve 18.7 and 18.6, respectively, both outperforming uniform weighting. The bounded two-level rule performs best at 19.6. This suggests that the relative ordering induced by RKL variance is more useful than its raw magnitude, supporting its use as a relative priority signal rather than directly
determining the update magnitude.

Additional analysis of the empirical RKL-variance distribution is provided in \Cref{app:energy_distribution}. Implementation
details for the alternative weighting rules are given in
\Cref{app:implementation_details}. Further ablations on hyperparameter sensitivity and alternative prioritization signals are provided in \Cref{app:hyperparameter_validation,app:alternative_signal}, respectively, to evaluate the robustness of the weighting rule and the choice of prioritization signal.

\paragraph{Effect of the number of updates.}
Our default configuration applies $10$ learner updates to each rollout batch. We further increase the reuse rate to $50$ updates per batch (U50) to study whether more aggressive off-policy optimization can further improve
rollout efficiency. As shown in \Cref{fig:update_num_comparison}, increasing the number of updates substantially accelerates early
learning. U50 reaches strong performance within only a few rollout iterations, showing that \methodname remains effective under more aggressive rollout reuse.

The two configurations eventually converge to a similar performance level, and U50 provides no improvement in final accuracy despite requiring $5\times$ more learner updates per rollout batch. Additional reuse therefore mainly trades learner computation for fewer rollouts. We use U10 as the default setting for a better balance between rollout
efficiency and optimization cost, while U50 can reduce the number of required rollout iterations when generation is the primary bottleneck. 
\let\wraptable\preprintoriginalwraptable
\let\endwraptable\endpreprintoriginalwraptable
\let\wrapfigure\preprintoriginalwrapfigure
\let\endwrapfigure\endpreprintoriginalwrapfigure
\section{Conclusion}

In this work, we propose \methodname, a practical framework for off-policy reverse-KL distillation with repeated rollout reuse. Starting from the standard OPD objective, we derive the corresponding off-policy formulation and introduce stabilized prefix-level correction together with current-policy one-step resampling to make multi-update training practical. We further introduce \emph{RKL variance} as a token-level measure of remaining reverse-KL learning signal and use it to allocate optimization effort more effectively across reused prefixes. Extensive experiments across multiple student and teacher scales demonstrate that \methodname enables substantially more rollout-efficient distillation, matching or outperforming OPD baselines trained with $4\times$ more rollout iterations. Overall, our results suggest that effective rollout reuse benefits from
both off-policy stabilization and adaptive allocation of optimization effort across token positions.

\section*{Ethics Statement}
This work studies the distillation of student language models from
stronger teacher models. All experiments use publicly available
open-source models, datasets, and evaluation benchmarks. The study
does not involve human subjects, private user data, or deployment in
high-stakes settings. We do not identify ethical concerns specific to
this study beyond those generally associated with training and
evaluating language models.
\section*{Reproducibility Statement}
We provide the models, benchmarks, and baselines used in our experiments
in \Cref{sec:exp_setup}. Implementation details and hyperparameters are reported in \Cref{app:implementation_details}, and the complete training procedure is summarized in \Cref{alg:method}. We also include anonymized code in the supplementary material to facilitate reproduction of our
results.



\bibliography{iclr2027_conference}
\bibliographystyle{abbrvnat}
\clearpage
\appendix
\RenewDocumentEnvironment{wrapfigure}{O{} m m}
  {\begin{figure}[tb]\centering\begin{minipage}{#3}}
  {\end{minipage}\end{figure}}
\section{Mathematical Proofs}
\label{app:mathematical_proofs}

\subsection{Proof of the OPD--NPG Gradient Identity}
\label{app:opd_npg_identity}

We first introduce the notation and establish the lemmas needed for Proposition~\ref{prop:local_fisher_learnability}. In what follows, we write $s$ for a fixed prefix $c_t$ and fix a parameter value $\theta\in\mathbb R^d$. Let $z_\theta(s)\in\mathbb R^V$ denote the student logits over a finite
vocabulary $\mathcal V$, with $|\mathcal V|=V$, and let
\[
p_s(a)=P_\theta(a\mid s),
\qquad
q_s(a)=P_T(a\mid s),
\qquad
p_s=\operatorname{softmax}(z_\theta(s)).
\]
We assume that the teacher distribution $q_s$ is fixed, both
distributions have full support, and the logits are twice continuously
differentiable in a neighborhood of $\theta$.

The local OPD objective is
\begin{equation}
L_s(\theta)
=
D_{\mathrm{KL}}(p_s\|q_s)
=
\sum_{a\in\mathcal V}
p_s(a)\log\frac{p_s(a)}{q_s(a)}.
\label{eq:proof_local_rkl}
\end{equation}
Define the log-ratio signal, its mean, and its centered version as
\[
A_s(a)
=
\log p_s(a)-\log q_s(a),
\qquad
\bar A_s
=
\mathbb E_{a\sim p_s}[A_s(a)]
=
L_s(\theta),
\qquad
X_s(a)
=
A_s(a)-\bar A_s.
\]
We also use $A_s,X_s\in\mathbb R^V$ to denote the vectors containing
these action-wise values. By construction,
\begin{equation}
p_s^\top X_s
=
\sum_{a\in\mathcal V}p_s(a)X_s(a)
=
0.
\label{eq:proof_centered_signal}
\end{equation}
The local relative-entropy variance is
\begin{equation}
E_s
=
\operatorname{Var}_{a\sim p_s}[A_s(a)]
=
\sum_{a\in\mathcal V}p_s(a)X_s(a)^2.
\label{eq:proof_energy}
\end{equation}

Define the logit Jacobian and categorical Fisher matrix by
\begin{equation}
J_s
=
\frac{\partial z_\theta(s)}{\partial\theta}
\in\mathbb R^{V\times d},
\qquad
G_s
=
\operatorname{Diag}(p_s)-p_sp_s^\top
\in\mathbb R^{V\times V}.
\label{eq:proof_jacobian_logit_fisher}
\end{equation}
Define the score vector
\begin{equation}
\psi_s(a)
:=
\nabla_\theta \log p_s(a)
\in \mathbb R^d,
\end{equation}
and the corresponding conditional Fisher matrix
\begin{equation}
F_s
:=
\mathbb E_{a\sim p_s}
\left[
\psi_s(a)\psi_s(a)^\top
\right]
\in \mathbb R^{d\times d}.
\label{eq:proof_parameter_fisher_definition}
\end{equation}
All expectations in this subsection are taken at the fixed prefix $s$.
For any matrix $M$, $M^\dagger$ denotes its Moore--Penrose
pseudoinverse.


\begin{lemma}[Local OPD gradient]
\label{lem:opd_parameter_gradient}
The parameter gradient of the local OPD objective satisfies
\begin{equation}
g_s
:=
\nabla_\theta L_s(\theta)
=
J_s^\top G_sX_s.
\label{eq:proof_opd_parameter_gradient}
\end{equation}
\end{lemma}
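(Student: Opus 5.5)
The plan is a direct chain-rule computation. First I will compute the gradient of $L_s$ with respect to the logits $z=z_\theta(s)$, and then pull it back to parameters through $J_s$.

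\textbf{Step 1 (softmax derivative).} For $p_s=\operatorname{softmax}(z)$ we have $\partial p_s(a)/\partial z_b = p_s(a)(\mathbb 1\{a=b\}-p_s(b))$. So the Jacobian $\partial p_s/\partial z$ equals $G_s=\operatorname{Diag}(p_s)-p_sp_s^\top$, which is symmetric. Equivalently, $\nabla_z\log p_s(a)=e_a-p_s$.

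\textbf{Step 2 (logit gradient of the RKL).} Differentiate $L_s=\sum_a p_s(a)\bigl(\log p_s(a)-\log q_s(a)\bigr)$ with $q_s$ fixed:
\[
\frac{\partial L_s}{\partial z}
=\sum_a \frac{\partial p_s(a)}{\partial z}\,A_s(a)
+\sum_a p_s(a)\,\frac{\partial \log p_s(a)}{\partial z}.
\]
The second sum is $\sum_a \partial p_s(a)/\partial z=\partial(\sum_a p_s(a))/\partial z=0$; this is the score-function term that vanishes. Hence $\nabla_z L_s=G_s^\top A_s=G_sA_s$.

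Next, $G_s\mathbf 1=p_s-p_s(p_s^\top\mathbf 1)=0$. Therefore $G_sA_s=G_s(A_s-\bar A_s\mathbf 1)=G_sX_s$.

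\textbf{Step 3 (parameter chain rule).} By the chain rule, $\nabla_\theta L_s=(\partial z_\theta(s)/\partial\theta)^\top\nabla_z L_s=J_s^\top G_sX_s$. This step uses the assumed differentiability of the logits; full support of $p_s$ and $q_s$ makes $A_s$ finite.

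The only point needing care is Step 2. There I must show the $p\,\nabla\log p$ term cancels via normalization, and replace $A_s$ by the centered $X_s$ using $G_s\mathbf 1=0$. Everything else is routine.
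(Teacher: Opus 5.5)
Your proof is correct and is essentially the paper's argument. Both proofs drop the score-function term by normalization, both replace $A_s$ with the centered $X_s$ using the zero-mean property (your $G_s\mathbf 1=0$ is the logit-space form of the paper's zero-mean score identity $\sum_a p_s(a)\psi_s(a)=0$), and both identify $\operatorname{Diag}(p_s)-p_sp_s^\top$ as the softmax Jacobian. The only difference is the order of the bookkeeping: you first form the logit gradient $\nabla_z L_s=G_sA_s$ and pull it back through $J_s^\top$ at the end, whereas the paper carries the parameter score $\psi_s(a)=J_s^\top(\mathbf e_a-p_s)$ throughout and expands into $J_s^\top G_sX_s$ only in the last step.
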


\begin{proof}
For the softmax distribution,
\[
\frac{\partial\log p_s(a)}{\partial z_s(j)}
=
\mathbf 1\{a=j\}-p_s(j).
\]
Therefore, by the chain rule,
\begin{equation}
\psi_s(a)
=
J_s^\top(\mathbf e_a-p_s),
\label{eq:proof_parameter_score}
\end{equation}
where $\mathbf e_a\in\mathbb R^V$ is the one-hot vector corresponding
to action $a$.

The score has zero expectation:
\begin{align}
\sum_a p_s(a)\psi_s(a)
&=
\sum_a p_s(a)\nabla_\theta\log p_s(a)
\nonumber\\
&=
\sum_a\nabla_\theta p_s(a)
\nonumber\\
&=
\nabla_\theta\sum_a p_s(a)
=
0.
\label{eq:proof_score_zero}
\end{align}

Differentiating
$L_s(\theta)=\sum_a p_s(a)A_s(a)$ gives
\begin{align}
g_s
&=
\sum_a A_s(a)\nabla_\theta p_s(a)
+
\sum_a p_s(a)\nabla_\theta A_s(a).
\label{eq:proof_rkl_product_rule}
\end{align}
Because the teacher is fixed,
$\nabla_\theta A_s(a)=\psi_s(a)$, so the second term vanishes by the
zero-mean score identity. Using
$\nabla_\theta p_s(a)=p_s(a)\psi_s(a)$ in the first term gives
\begin{align}
g_s
&=
\sum_a p_s(a)A_s(a)\psi_s(a)
\nonumber\\
&=
\sum_a p_s(a)
\bigl(X_s(a)+\bar A_s\bigr)\psi_s(a)
\nonumber\\
&=
\sum_a p_s(a)X_s(a)\psi_s(a),
\label{eq:proof_centered_parameter_gradient}
\end{align}
where the constant term vanishes again by the zero-mean score identity.

Substituting
$\psi_s(a)=J_s^\top(\mathbf e_a-p_s)$, we obtain
\begin{align}
g_s
&=
\sum_a p_s(a)X_s(a)
J_s^\top(\mathbf e_a-p_s)
\nonumber\\
&=
J_s^\top
\left[
\sum_a p_s(a)X_s(a)\mathbf e_a
-
p_s\sum_a p_s(a)X_s(a)
\right]
\nonumber\\
&=
J_s^\top
\left[
\operatorname{Diag}(p_s)X_s
-
p_s(p_s^\top X_s)
\right]
\nonumber\\
&=
J_s^\top
\left[
\operatorname{Diag}(p_s)-p_sp_s^\top
\right]X_s
\nonumber\\
&=
J_s^\top G_sX_s.
\end{align}
\end{proof}


\begin{lemma}[Parameter-space natural gradient]
\label{lem:parameter_natural_gradient}
The conditional parameter Fisher satisfies
\begin{equation}
F_s
=
J_s^\top G_sJ_s.
\label{eq:proof_fisher_pullback}
\end{equation}
The parameter-space natural gradient of $L_s$ at the fixed prefix is
\begin{equation}
n_s
=
F_s^\dagger g_s
=
(J_s^\top G_sJ_s)^\dagger J_s^\top G_sX_s.
\end{equation}
Here $n_s$ denotes the natural gradient without a learning-rate factor;
the corresponding descent direction is $-n_s$.
\end{lemma}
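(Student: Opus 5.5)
The first claim, $F_s=J_s^\top G_sJ_s$, follows directly from the score representation \eqref{eq:proof_parameter_score} obtained in the proof of Lemma~\ref{lem:opd_parameter_gradient}. I will substitute $\psi_s(a)=J_s^\top(\mathbf e_a-p_s)$ into the definition \eqref{eq:proof_parameter_fisher_definition} and pull the $\theta$-independent-of-$a$ factors $J_s^\top$ and $J_s$ outside the expectation. This leaves
\[
F_s=J_s^\top\Bigl(\mathbb E_{a\sim p_s}\bigl[(\mathbf e_a-p_s)(\mathbf e_a-p_s)^\top\bigr]\Bigr)J_s .
\]

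The remaining step is to identify the inner matrix with $G_s$. I will expand the outer product and use three facts:
\begin{itemize}
\item $\mathbb E[\mathbf e_a\mathbf e_a^\top]=\operatorname{Diag}(p_s)$, because $\mathbf e_a\mathbf e_a^\top$ is the diagonal indicator of $a$;
\item $\mathbb E[\mathbf e_a]=p_s$;
\item $p_s^\top\mathbf 1$-type normalization, i.e.\ $\sum_a p_s(a)=1$.
\end{itemize}
These give $\operatorname{Diag}(p_s)-p_sp_s^\top-p_sp_s^\top+p_sp_s^\top=G_s$. This is the categorical covariance computation, so the pullback identity follows.

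For the natural gradient, no further argument is needed beyond substitution. By definition $n_s=F_s^\dagger g_s$. Inserting the pullback form of $F_s$ and the gradient formula $g_s=J_s^\top G_sX_s$ from Lemma~\ref{lem:opd_parameter_gradient} yields the displayed expression.

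I will add one remark. Since $g_s=J_s^\top G_sX_s$ lies in $\operatorname{range}(J_s^\top G_s^{1/2})=\operatorname{range}(F_s)$, the pseudoinverse produces a genuine solution of $F_sn=g_s$, namely the minimum-norm one. This justifies calling $n_s$ the natural gradient even when $F_s$ is singular. It also confirms that the descent direction is $-n_s$, with no step-size factor.

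The only place requiring care is the range argument for the remark. I will handle it via $F_s=(G_s^{1/2}J_s)^\top(G_s^{1/2}J_s)$, using the fact that $G_s$ is positive semidefinite. The rest is routine algebra, so I expect no real obstacle.
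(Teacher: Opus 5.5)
Your proposal is correct and follows the paper's proof essentially line for line. The paper likewise substitutes $\psi_s(a)=J_s^\top(\mathbf e_a-p_s)$, pulls out the Jacobians, expands the categorical covariance to obtain $G_s$, and then inserts the pullback form and Lemma~\ref{lem:opd_parameter_gradient} into the definition $n_s=F_s^\dagger g_s$. Your extra remark that $g_s\in\operatorname{Range}(F_s)$ is not part of the paper's proof of this lemma, but it is correct; the paper makes the same observation later, in its equal-Fisher-budget discussion.
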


\begin{proof}
By the definition of $F_s$ and
$\psi_s(a)=J_s^\top(\mathbf e_a-p_s)$,
\begin{align}
F_s
&=
\sum_a p_s(a)\psi_s(a)\psi_s(a)^\top
\nonumber\\
&=
J_s^\top
\left[
\sum_a p_s(a)
(\mathbf e_a-p_s)(\mathbf e_a-p_s)^\top
\right]
J_s.
\end{align}
The matrix inside the brackets is
\begin{align}
&\sum_a p_s(a)
(\mathbf e_a-p_s)(\mathbf e_a-p_s)^\top
\nonumber\\
&\quad=
\sum_a p_s(a)\mathbf e_a\mathbf e_a^\top
-
\left(\sum_a p_s(a)\mathbf e_a\right)p_s^\top
\nonumber\\
&\qquad
-
p_s\left(\sum_a p_s(a)\mathbf e_a\right)^\top
+
\left(\sum_a p_s(a)\right)p_sp_s^\top
\nonumber\\
&\quad=
\operatorname{Diag}(p_s)
-p_sp_s^\top
\nonumber\\
&\quad=
G_s.
\end{align}
Hence,
\begin{equation}
F_s
=
J_s^\top G_sJ_s.
\end{equation}

By definition, the parameter-space natural gradient is
\begin{equation}
n_s
:=
F_s^\dagger g_s,
\end{equation}
where the Moore--Penrose pseudoinverse allows for a possibly singular
conditional Fisher matrix. Using
$F_s=J_s^\top G_sJ_s$ and
$g_s=J_s^\top G_sX_s$, we obtain
\begin{equation}
n_s
=
(J_s^\top G_sJ_s)^\dagger
J_s^\top G_sX_s.
\label{eq:proof_parameter_natural_gradient}
\end{equation}
\end{proof}


\begin{lemma}[Fisher--Jacobian projection]
\label{lem:fisher_jacobian_projection}
Let
\begin{equation}
B_s:=G_s^{1/2}J_s,
\qquad
y_s:=G_s^{1/2}X_s.
\end{equation}
Then
\begin{align}
&X_s^\top G_sJ_s
(J_s^\top G_sJ_s)^\dagger
J_s^\top G_sX_s
\nonumber\\
&\qquad=
\left\|
P_{\operatorname{Range}(B_s)}y_s
\right\|_2^2
\le
\|y_s\|_2^2
=
X_s^\top G_sX_s,
\label{eq:fisher_jacobian_projection}
\end{align}
where $P_{\operatorname{Range}(B_s)}$ denotes the orthogonal projector
onto $\operatorname{Range}(B_s)$.

Equality holds if and only if
\begin{equation}
G_s^{1/2}X_s
\in
\operatorname{Range}(G_s^{1/2}J_s).
\label{eq:fisher_jacobian_realizability}
\end{equation}
\end{lemma}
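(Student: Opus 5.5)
The argument rests on the positive semidefiniteness of $G_s$ and one pseudoinverse identity.

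\textbf{Step 1: rewrite the quadratic form.} Since $G_s=\operatorname{Diag}(p_s)-p_sp_s^\top$ is a covariance matrix, it is symmetric positive semidefinite. Hence it has a unique symmetric PSD square root $G_s^{1/2}$ with $G_s^{1/2}G_s^{1/2}=G_s$. With $B_s=G_s^{1/2}J_s$ and $y_s=G_s^{1/2}X_s$ we have
\[
J_s^\top G_sJ_s=B_s^\top B_s,\qquad J_s^\top G_sX_s=B_s^\top y_s,\qquad X_s^\top G_sJ_s=y_s^\top B_s .
\]
The left-hand side therefore becomes $y_s^\top B_s(B_s^\top B_s)^\dagger B_s^\top y_s$.

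\textbf{Step 2: identify the projector.} I will invoke the standard Moore--Penrose fact
\[
B(B^\top B)^\dagger B^\top=BB^\dagger=P_{\operatorname{Range}(B)}.
\]
To justify it, use the SVD $B=U\Sigma V^\top$. Then $(B^\top B)^\dagger=V(\Sigma^\top\Sigma)^\dagger V^\top$, so
\[
B(B^\top B)^\dagger B^\top=U\,\Sigma(\Sigma^\top\Sigma)^\dagger\Sigma^\top\,U^\top .
\]
The middle factor $\Sigma(\Sigma^\top\Sigma)^\dagger\Sigma^\top$ is diagonal with ones exactly at the nonzero singular values. The whole product is thus the orthogonal projector onto the span of the corresponding left singular vectors, which is $\operatorname{Range}(B)$. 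This step, making sure the pseudoinverse of a possibly singular Gram matrix still yields the exact projector, is the only real obstacle; the SVD argument handles it cleanly. Because $P:=P_{\operatorname{Range}(B_s)}$ is symmetric and idempotent, the quadratic form equals $y_s^\top P y_s=y_s^\top P^\top P y_s=\|Py_s\|_2^2$.

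\textbf{Step 3: the inequality.} Orthogonal projection is nonexpansive. By Pythagoras, $\|y_s\|^2=\|Py_s\|^2+\|(I-P)y_s\|^2$, so $\|Py_s\|_2^2\le\|y_s\|_2^2$. Moreover,
\[
\|y_s\|_2^2=X_s^\top G_s^{1/2}G_s^{1/2}X_s=X_s^\top G_sX_s .
\]

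\textbf{Step 4: equality case.} By the same Pythagorean decomposition, equality holds iff $(I-P)y_s=0$. That is equivalent to $y_s\in\operatorname{Range}(B_s)$, which is exactly condition~\eqref{eq:fisher_jacobian_realizability}.
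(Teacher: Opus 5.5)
Your proof is correct and follows essentially the same route as the paper. You rewrite the form as $y_s^\top B_s(B_s^\top B_s)^\dagger B_s^\top y_s$, identify the middle matrix as the orthogonal projector onto $\operatorname{Range}(B_s)$ via an SVD, and conclude by nonexpansiveness of projection, with equality iff $y_s\in\operatorname{Range}(B_s)$. You also make explicit two points the paper leaves implicit (that $G_s$ is PSD so a symmetric square root exists, and that $y_s^\top P y_s=\|Py_s\|_2^2$ by symmetry and idempotence), which tightens the argument without changing it.
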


\begin{proof}
Since
$J_s^\top G_sJ_s=B_s^\top B_s$ and
$J_s^\top G_sX_s=B_s^\top y_s$,
\begin{align}
&X_s^\top G_sJ_s
(J_s^\top G_sJ_s)^\dagger
J_s^\top G_sX_s
\nonumber\\
&\qquad=
y_s^\top
B_s(B_s^\top B_s)^\dagger B_s^\top
y_s
\nonumber\\
&\qquad=
y_s^\top
P_{\operatorname{Range}(B_s)}
y_s
\nonumber\\
&\qquad=
\left\|
P_{\operatorname{Range}(B_s)}y_s
\right\|_2^2.
\end{align}
The middle matrix is the orthogonal projector onto
$\operatorname{Range}(B_s)$. Indeed, if
$B_s=U_r\Sigma_rV_r^\top$ is a compact SVD, then
\[
B_s(B_s^\top B_s)^\dagger B_s^\top
=
U_rU_r^\top
=
P_{\operatorname{Range}(B_s)}.
\]
An orthogonal projection cannot increase the Euclidean norm, giving
the inequality. Equality holds exactly when
$y_s\in\operatorname{Range}(B_s)$.
\end{proof}


We now prove a more general statement that includes
Proposition~\ref{prop:local_fisher_learnability} as the locally realizable case.

\begin{proposition}[OPD--NPG gradient pairing]
\label{prop:opd_npg_gradient_pairing}
Fix a prefix $s$, and let
\[
g_s=\nabla_\theta L_s(\theta),
\qquad
n_s=F_s^\dagger g_s
\]
denote the ordinary OPD parameter gradient and its parameter-space
natural gradient, respectively. Then
\begin{equation}
\langle g_s,n_s\rangle
=
\left\|
P_{\operatorname{Range}(G_s^{1/2}J_s)}
G_s^{1/2}X_s
\right\|_2^2
\le
E_s.
\label{eq:proof_opd_npg_projection_identity}
\end{equation}
Moreover, equality holds if and only if
\begin{equation}
G_s^{1/2}X_s
\in
\operatorname{Range}(G_s^{1/2}J_s).
\label{eq:proof_opd_npg_realizability}
\end{equation}
In particular, under this local realizability condition,
\begin{equation}
\langle g_s,n_s\rangle
=
E_s.
\label{eq:proof_opd_npg_energy_identity}
\end{equation}
\end{proposition}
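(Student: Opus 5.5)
The proof assembles Lemmas~\ref{lem:opd_parameter_gradient}, \ref{lem:parameter_natural_gradient} and \ref{lem:fisher_jacobian_projection}. By Lemma~\ref{lem:opd_parameter_gradient}, $g_s=J_s^\top G_sX_s$. By Lemma~\ref{lem:parameter_natural_gradient}, $n_s=(J_s^\top G_sJ_s)^\dagger J_s^\top G_sX_s$. Hence
\[
\langle g_s,n_s\rangle
=
X_s^\top G_sJ_s(J_s^\top G_sJ_s)^\dagger J_s^\top G_sX_s ,
\]
using that $G_s$ is symmetric. Lemma~\ref{lem:fisher_jacobian_projection} then identifies this quantity with $\|P_{\operatorname{Range}(G_s^{1/2}J_s)}G_s^{1/2}X_s\|_2^2$. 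It bounds it by $X_s^\top G_sX_s$, with equality if and only if $G_s^{1/2}X_s\in\operatorname{Range}(G_s^{1/2}J_s)$. Here $G_s^{1/2}$ is well defined because $G_s=\operatorname{Diag}(p_s)-p_sp_s^\top$ is symmetric positive semidefinite; it is the covariance matrix of $\mathbf e_a$ under $a\sim p_s$.

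It remains to show $X_s^\top G_sX_s=E_s$. Expanding,
\[
X_s^\top G_sX_s=\sum_a p_s(a)X_s(a)^2-(p_s^\top X_s)^2 .
\]
By \eqref{eq:proof_centered_signal} the second term vanishes, and by \eqref{eq:proof_energy} the first equals $E_s$. Substituting this into the chain from Lemma~\ref{lem:fisher_jacobian_projection} gives \eqref{eq:proof_opd_npg_projection_identity} together with its equality characterization \eqref{eq:proof_opd_npg_realizability}. The special case \eqref{eq:proof_opd_npg_energy_identity} is then immediate. This also yields Proposition~\ref{prop:local_fisher_learnability}, with \eqref{eq:proof_opd_npg_realizability} serving as the precise realizability condition.

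The only delicate point is the pseudoinverse identity $B(B^\top B)^\dagger B^\top=P_{\operatorname{Range}(B)}$, but Lemma~\ref{lem:fisher_jacobian_projection} already establishes it. Beyond that, I would note how to read the realizability condition. $\operatorname{Null}(G_s^{1/2})=\operatorname{Null}(G_s)=\operatorname{span}(\mathbf 1)$ under full support. So the condition says that some parameter direction $v$ makes $J_sv$ equal to $X_s$ up to an additive constant shift of logits. In words, the model can locally move its logits along the centered log-ratio correction, which justifies the phrasing in Proposition~\ref{prop:local_fisher_learnability}.
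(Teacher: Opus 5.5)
Your proof is correct and follows the paper's argument essentially step for step. You combine the gradient and natural-gradient lemmas to write $\langle g_s,n_s\rangle=X_s^\top G_sJ_s(J_s^\top G_sJ_s)^\dagger J_s^\top G_sX_s$, invoke the Fisher--Jacobian projection lemma for the identity, bound, and equality case, and use $p_s^\top X_s=0$ to get $X_s^\top G_sX_s=E_s$. Your extra observation that $\operatorname{Null}(G_s^{1/2})=\operatorname{span}(\mathbf 1)$ under full support, so the condition reads $J_sv=X_s+c\mathbf 1$ for some $v$ and $c$, is correct and usefully unpacks the realizability condition, which the paper leaves implicit.
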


\begin{proof}
By Lemmas~\ref{lem:opd_parameter_gradient}
and~\ref{lem:parameter_natural_gradient},
\begin{align}
\langle g_s,n_s\rangle
&=
g_s^\top F_s^\dagger g_s
\nonumber\\
&=
X_s^\top G_sJ_s
(J_s^\top G_sJ_s)^\dagger
J_s^\top G_sX_s.
\label{eq:proof_pairing_parameter_form}
\end{align}
Applying Lemma~\ref{lem:fisher_jacobian_projection} gives
\begin{equation}
\langle g_s,n_s\rangle
=
\left\|
P_{\operatorname{Range}(G_s^{1/2}J_s)}
G_s^{1/2}X_s
\right\|_2^2
\le
\|G_s^{1/2}X_s\|_2^2.
\end{equation}
The upper bound is
\begin{align}
\|G_s^{1/2}X_s\|_2^2
&=
X_s^\top G_sX_s
\nonumber\\
&=
X_s^\top
\left[
\operatorname{Diag}(p_s)-p_sp_s^\top
\right]
X_s
\nonumber\\
&=
\sum_a p_s(a)X_s(a)^2
-
(p_s^\top X_s)^2
\nonumber\\
&=
\sum_a p_s(a)X_s(a)^2
\nonumber\\
&=
\operatorname{Var}_{a\sim p_s}[A_s(a)]
=
E_s,
\end{align}
where $p_s^\top X_s=0$ by construction.

Finally, by Lemma~\ref{lem:fisher_jacobian_projection}, the projection
preserves the full norm if and only if
\[
G_s^{1/2}X_s
\in
\operatorname{Range}(G_s^{1/2}J_s).
\]
Under this condition,
$\langle g_s,n_s\rangle=E_s$, which proves
Proposition~\ref{prop:opd_npg_gradient_pairing}.
\end{proof}
\paragraph{Equal Fisher budget interpretation.}
Proposition~\ref{prop:local_fisher_learnability} also characterizes
the first-order progress available under a fixed local change in the
student policy. Consider
\[
\max_{\Delta\in\mathbb R^d}
-g_s^\top\Delta
\qquad
\text{subject to}
\qquad
\frac{1}{2}\Delta^\top F_s\Delta\leq\epsilon.
\]

Let $B_s=G_s^{1/2}J_s$ and $y_s=G_s^{1/2}X_s$. From the definitions
above,
\[
F_s=B_s^\top B_s,
\qquad
g_s=B_s^\top y_s.
\]
Hence
$g_s\in\operatorname{Range}(B_s^\top)
=\operatorname{Range}(F_s)$.
Directions in $\operatorname{Null}(F_s)$ neither consume Fisher
budget nor change the first-order objective.

Since $g_s\in\operatorname{Range}(F_s)$,
\begin{align}
-g_s^\top\Delta
&=
-\left(F_s^{\dagger/2}g_s\right)^\top
 \left(F_s^{1/2}\Delta\right) \\
&\leq
\sqrt{g_s^\top F_s^\dagger g_s}
\sqrt{\Delta^\top F_s\Delta} \\
&\leq
\sqrt{2\epsilon\,g_s^\top F_s^\dagger g_s}.
\end{align}
For $g_s\neq 0$, the bound is attained by
\[
\Delta_s^\star
=
-\sqrt{
\frac{2\epsilon}
{g_s^\top F_s^\dagger g_s}
}
F_s^\dagger g_s,
\]
which satisfies
$\frac{1}{2}\Delta_s^{\star\top}F_s\Delta_s^\star=\epsilon$.
Therefore,
\[
\max_{\frac{1}{2}\Delta^\top F_s\Delta\leq\epsilon}
-g_s^\top\Delta
=
\sqrt{2\epsilon\,g_s^\top F_s^\dagger g_s}.
\]

Under the local realizability condition of
Proposition~\ref{prop:local_fisher_learnability},
$g_s^\top F_s^\dagger g_s=E_s$, giving
\[
\max_{\frac{1}{2}\Delta^\top F_s\Delta\leq\epsilon}
-g_s^\top\Delta
=
\sqrt{2\epsilon E_s}.
\]
When $E_s=0$, the maximum first-order reduction is zero.\subsection{Proof of Proposition~\ref{prop:reliability_weighting}}
\label{app:proof_reliability_weighting}

\makeatletter
\@ifundefined{proposition*}{\newtheorem*{proposition*}{Proposition}}{}
\makeatother
\begin{proposition*}[SNR-guided local update scaling]
Fix a stored prefix $c_t$ and consider the local reverse-KL objective
as a function of the student logits,
\[
\mathcal L_t(z_t)
=
D_{\mathrm{KL}}(\operatorname{softmax}(z_t)\|q_t).
\]
Let $u_t=\nabla_{z_t}\mathcal L_t(z_t)$, and let
$\widehat u_t=u_t+\xi_t$ be an unbiased estimator of $u_t$, where
$\mathbb E[\xi_t]=0$ and
$0<\mathbb E\|\xi_t\|_2^2=\sigma_t^2<\infty$.
Define the local gradient signal-to-noise ratio, as in
\Cref{sec:local_rkl_variance}, as
\begin{equation}
\Gamma_t
=
\frac{\|u_t\|_2^2}
{\mathbb E\|\xi_t\|_2^2}.
\label{eq:gradient_snr}
\end{equation}
Let $\beta>0$ be a common smoothness upper bound over the stored
prefixes under consideration, and suppose $\mathcal L_t$ is
$\beta$-smooth. Let $\eta>0$. For a deterministic weight $w$ and the scalar-rescaled update
\[
\Delta_t(w)
=
-\eta w\widehat u_t,
\qquad
w\geq 0,
\]
the expected smoothness upper bound is minimized at
\begin{equation}
w_t^\star
=
\frac{1}{\beta\eta}
\frac{\Gamma_t}{1+\Gamma_t}.
\end{equation}
For fixed $\beta$ and $\eta$, $w_t^\star$ is monotonically increasing
and saturating in the gradient SNR.
\end{proposition*}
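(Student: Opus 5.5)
The plan is to apply the standard descent lemma for $\beta$-smooth functions to the random update and take expectations. This turns the bound into a scalar quadratic in $w$, which I then minimize in closed form.

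\emph{Descent inequality.} Since $\mathcal L_t$ is $\beta$-smooth in the logits, for any displacement $\Delta$ we have
\[
\mathcal L_t(z_t+\Delta)\le \mathcal L_t(z_t)+\langle u_t,\Delta\rangle+\tfrac{\beta}{2}\|\Delta\|_2^2 .
\]
I would apply this pointwise (for each realization of $\xi_t$) with $\Delta=\Delta_t(w)=-\eta w(u_t+\xi_t)$, and then take expectations over the resampled tokens. The weight $w$ is deterministic and $\mathbb E[\xi_t]=0$, so the linear term becomes $-\eta w\|u_t\|_2^2$. For the quadratic term,
\[
\mathbb E\|u_t+\xi_t\|_2^2=\|u_t\|_2^2+2\langle u_t,\mathbb E\xi_t\rangle+\mathbb E\|\xi_t\|_2^2=\|u_t\|_2^2+\sigma_t^2 .
\]
This gives exactly the displayed bound in Proposition~\ref{prop:reliability_weighting}. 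The only care needed is to check that the expectation of $\mathcal L_t$ along the update is well defined. That holds because $\mathcal L_t$ is continuous and the right-hand side is integrable, since $\sigma_t^2<\infty$.

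\emph{Minimization.} Write the right-hand side as $\phi(w)=\mathcal L_t(z_t)-a w+\tfrac{b}{2}w^2$, where $a=\eta\|u_t\|_2^2\ge 0$ and $b=\beta\eta^2(\|u_t\|_2^2+\sigma_t^2)>0$. Here $b>0$ because $\sigma_t^2>0$, which is why the hypothesis $\sigma_t^2\in(0,\infty)$ matters. Then $\phi$ is strictly convex with unconstrained minimizer $a/b\ge 0$, which lies in the feasible set $w\ge0$ and is therefore the unique constrained minimizer. Dividing numerator and denominator by $\sigma_t^2$ gives
\[
w_t^\star=\frac{a}{b}=\frac{\|u_t\|_2^2}{\beta\eta(\|u_t\|_2^2+\sigma_t^2)}=\frac{1}{\beta\eta}\,\frac{\Gamma_t}{1+\Gamma_t},
\]
which is \eqref{eq:oracle_snr_weight}. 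The degenerate case $u_t=0$ gives $w_t^\star=0$, consistent with the formula at $\Gamma_t=0$.

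\emph{Monotonicity and saturation.} The map $\Gamma\mapsto\Gamma/(1+\Gamma)=1-1/(1+\Gamma)$ has derivative $(1+\Gamma)^{-2}>0$ and tends to $1$ as $\Gamma\to\infty$. Hence $w_t^\star$ increases strictly in $\Gamma_t$ and approaches $1/(\beta\eta)$.

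No step is a real obstacle; the main point requiring attention is the smoothness hypothesis itself. Softmax reverse-KL is not globally smooth when $q_t$ has very small entries, so I would treat $\beta$ as an assumed bound along the segment between $z_t$ and $z_t+\Delta_t(w)$, as the statement does. I would not try to prove such a bound here.
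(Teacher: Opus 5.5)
Your proposal is correct and follows the paper's proof step for step: the descent lemma applied to $\Delta_t(w)=-\eta w\widehat u_t$, the expansion $\mathbb E\|\widehat u_t\|_2^2=\|u_t\|_2^2+\sigma_t^2$, minimization of the resulting strictly convex quadratic in $w$, and the same monotonicity and saturation argument. One small correction to your closing aside: for full-support $q_t$ the logit-space reverse KL is globally smooth, with a smoothness constant that grows only logarithmically in $1/\min_a q_t(a)$ (and in $V$), so the $\beta$-smoothness hypothesis is a quantitative assumption rather than a qualitative one.
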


\begin{proof}
Let $\mathcal L_t$ denote the local reverse-KL objective, with
\[
u_t
=
\nabla_{z_t} \mathcal L_t(z_t).
\]
By $\beta$-smoothness,
\begin{align}
\mathcal L_t(z_t+\Delta_t(w))
&\leq
\mathcal L_t(z_t)
+
\langle u_t,\Delta_t(w)\rangle
+
\frac{\beta}{2}\|\Delta_t(w)\|_2^2.
\end{align}
Substituting
$\Delta_t(w)=-\eta w\widehat u_t$
and taking expectations gives
\begin{align}
\mathbb E\left[
\mathcal L_t(z_t+\Delta_t(w))
\right]
&\leq
\mathcal L_t(z_t)
-
\eta w\,
\mathbb E\left[
\langle u_t,\widehat u_t\rangle
\right]
\nonumber\\
&\quad+
\frac{\beta\eta^2w^2}{2}
\mathbb E\|\widehat u_t\|_2^2.
\label{eq:expected_smoothness_start}
\end{align}

Since
$\widehat u_t=u_t+\xi_t$
and
$\mathbb E[\xi_t]=0$,
\[
\mathbb E
\left[
\langle u_t,\widehat u_t\rangle
\right]
=
\|u_t\|_2^2,
\]
and
\begin{align}
\mathbb E\|\widehat u_t\|_2^2
&=
\mathbb E\|u_t+\xi_t\|_2^2
\nonumber\\
&=
\|u_t\|_2^2
+
2\left\langle
u_t,\mathbb E[\xi_t]
\right\rangle
+
\mathbb E\|\xi_t\|_2^2
\nonumber\\
&=
\|u_t\|_2^2
+
\mathbb E\|\xi_t\|_2^2.
\end{align}
Therefore,
\begin{align}
\mathbb E\left[
\mathcal L_t(z_t+\Delta_t(w))
\right]
&\leq
\mathcal L_t(z_t)
-
\eta w\|u_t\|_2^2
\nonumber\\
&\quad+
\frac{\beta\eta^2w^2}{2}
\left(
\|u_t\|_2^2
+
\mathbb E\|\xi_t\|_2^2
\right).
\label{eq:expected_smoothness_bound}
\end{align}

The $w$-dependent part of the upper bound is
\[
\phi_t(w)
=
-\eta w\|u_t\|_2^2
+
\frac{\beta\eta^2w^2}{2}
\left(
\|u_t\|_2^2
+
\mathbb E\|\xi_t\|_2^2
\right).
\]
Differentiating gives
\[
\phi_t'(w)
=
-\eta\|u_t\|_2^2
+
\beta\eta^2w
\left(
\|u_t\|_2^2
+
\mathbb E\|\xi_t\|_2^2
\right).
\]
Setting $\phi_t'(w)=0$ yields
\begin{align}
w_t^\star
&=
\frac{1}{\beta\eta}
\frac{\|u_t\|_2^2}
{\|u_t\|_2^2+\mathbb E\|\xi_t\|_2^2}
\nonumber\\
&=
\frac{1}{\beta\eta}
\frac{\Gamma_t}{1+\Gamma_t}.
\end{align}
The quadratic is strictly convex, so this stationary point is the
unique minimizer over $w\geq0$. Finally,
$\Gamma/(1+\Gamma)$ is monotonically increasing for
$\Gamma\geq0$ and converges to $1$ as
$\Gamma\rightarrow\infty$.
Hence the oracle scale saturates at $1/(\beta\eta)$.
\end{proof}
\section{Additional Results and Analyses}
\label{app:more_results}
\subsection{RKL Variance and Gradient Reliability}
\label{app:finite_sample_reliability}

Proposition~\ref{prop:reliability_weighting} motivates increasing
but bounded priority as a function of logit-gradient reliability.
We next examine whether RKL variance provides a practical surrogate
for this reliability.

For a stored
prefix $c_t$, let $p_t$, $q_t$, and $A_t$ be defined as in \Cref{sec:local_rkl_variance}. The logit gradient in \Cref{prop:reliability_weighting} has the closed form
\begin{equation}
u_t
=
\nabla_{z_t}
D_{\mathrm{KL}}(p_t\|q_t)
=
\mathbb E_{a\sim p_t}
\left[
A_t(a)(\mathbf e_a-p_t)
\right],
\label{eq:logit_rkl_gradient}
\end{equation}
where $z_t=z_\theta(c_t)$ and $\mathbf e_a$ is the one-hot vector
associated with action $a$.

Let $\widehat u_t$ denote the unbiased $K$-sample estimator corresponding to one-step resampling. Conditioned on the current model
and prefix, its SNR, as defined in \Cref{sec:local_rkl_variance}, is
\begin{equation}
\Gamma_t
=
\frac{\|u_t\|_2^2}{\sigma_t^2},
\qquad
\sigma_t^2
:=
\operatorname{tr}
\operatorname{Cov}(\widehat u_t).
\label{eq:logit_gradient_snr}
\end{equation}

To make this diagnostic tractable, we perform it offline on a small subset of stored prefixes rather than as part of training. Specifically,
we collect $1280$ student-visited prefix states, evaluate and store the full-vocabulary student and teacher conditional distributions, and then compute $E_t$, the logit-gradient signal, and the sampling variance offline. 

As shown in \Cref{fig:snr-vs-et}, exact RKL variance $E_t$ exhibits a strong positive rank correlation with $\Gamma_t$ across the evaluated prefixes. Higher-RKL-variance positions therefore tend to provide more reliable finite-sample gradient estimates, making the ordering induced by $E_t$ broadly consistent with the reliability ordering favored by
\Cref{prop:reliability_weighting}.
\begin{figure}[t]
\centering
\includegraphics[width=\linewidth]{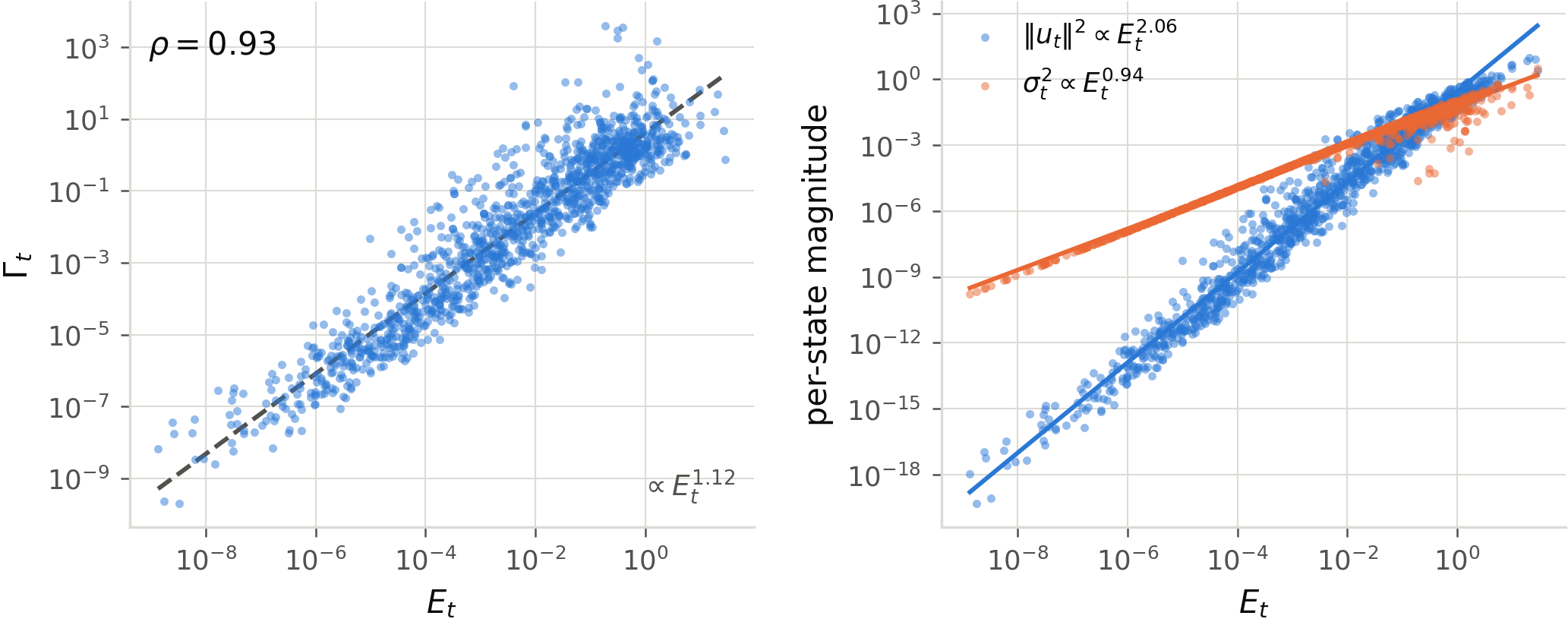}
\caption{
\textbf{Offline diagnostic of RKL variance and finite-sample gradient
reliability.}
We collect $1280$ student-visited prefix states and store the
full-vocabulary student and teacher conditional distributions, from
which the RKL variance, squared logit-gradient signal, and sampling
variance of the $K{=}16$ one-step resampling estimator are computed
offline.
\textbf{Left:} RKL variance is strongly rank-correlated with gradient
SNR ($\rho=0.93$), with the approximate scaling
$\Gamma_t\propto E_t^{1.12}$.
\textbf{Right:} the squared gradient signal scales approximately as
$\|u_t\|_2^2\propto E_t^{2.06}$, while the estimator variance scales as
$\sigma_t^2\propto E_t^{0.94}$.
}
\label{fig:snr-vs-et}
\end{figure}

The signal--noise decomposition helps explain this association.
Toward lower RKL variance, the squared gradient signal decreases more
rapidly than the estimator variance. Although both quantities become
smaller, their ratio also decreases. Thus, low-$E_t$ positions exhibit
not only weaker population-level optimization signal, as characterized
by Proposition~\ref{prop:local_fisher_learnability}, but also lower
relative reliability under finite-$K$ estimation.

Although $E_t$ is strongly correlated with gradient SNR, this relationship does not provide a calibrated numerical mapping between the two quantities. As illustrated by the empirical distribution of $\widehat E_t^K$ (\Cref{app:energy_distribution}), its magnitude varies substantially across positions and training updates. Directly substituting $E_t$ into the oracle weighting function would therefore require an additional choice of calibration.

During training, we estimate RKL variance with $\widehat E_t^K$ using the same candidate tokens and reverse-KL signals already available from one-step resampling, requiring no additional model evaluations. We use this estimate to recover a coarse priority ordering rather than treating its raw magnitude as a calibrated reliability score.

\paragraph{Implications for bounded priorities.}
The oracle rule in
\Cref{eq:oracle_snr_weight} suggests two qualitative properties for
practical prioritization: priority should increase with gradient
reliability, while its amplification should remain bounded.
Combined with the observed RKL-variance--SNR association, this motivates
our bounded two-level rule in
\Cref{eq:rkl_variance_reweighting}. The rule preserves the coarse
priority ordering suggested by $E_t$ while avoiding direct dependence
on its potentially unstable magnitude. We additionally retain nonzero
weight on the low-variance group because a small finite-$K$ estimate
does not imply that the underlying local learning signal is exactly
zero.

\begin{remark}[Logit-space reliability]
The reliability diagnostic is evaluated in logit space at each fixed
prefix. Let $J_t=\partial z_t/\partial\theta$ denote the logit
Jacobian. The corresponding parameter-space gradient and estimation
error satisfy
\[
g_t = J_t^\top u_t,
\qquad
\widehat g_t-g_t
=
J_t^\top(\widehat u_t-u_t).
\]
Thus, with
$\Sigma_t=\operatorname{Cov}(\widehat u_t)$,
the parameter-space sampling variance is
\[
\mathbb E\|\widehat g_t-g_t\|_2^2
=
\operatorname{tr}
\left(
J_tJ_t^\top\Sigma_t
\right),
\]
and therefore additionally depends on the local full-model Jacobian
geometry.

We use logit space because $E_t$, $u_t$, and the sampling variance of
the $K$-sample estimator are determined entirely by the student and
teacher next-token distributions at the fixed prefix and can therefore
be computed exactly from the stored full-vocabulary distributions.
Computing the corresponding parameter-space moments additionally
requires full-model Jacobian information at each prefix and is
substantially more expensive. The smoothness argument of \Cref{prop:reliability_weighting} also
applies in parameter space, yielding the same increasing and saturating
form with
\[
\Gamma_t^\theta
=
\frac{\|g_t\|_2^2}
{\operatorname{tr}(J_tJ_t^\top\Sigma_t)}
\]
and the corresponding parameter-space smoothness constant.
\end{remark}
\subsection{Distribution of Estimated RKL Variance}
\label{app:energy_distribution}

\begin{wrapfigure}[20]{r}{0.44\textwidth}
    \centering
    \includegraphics[width=\linewidth]
    {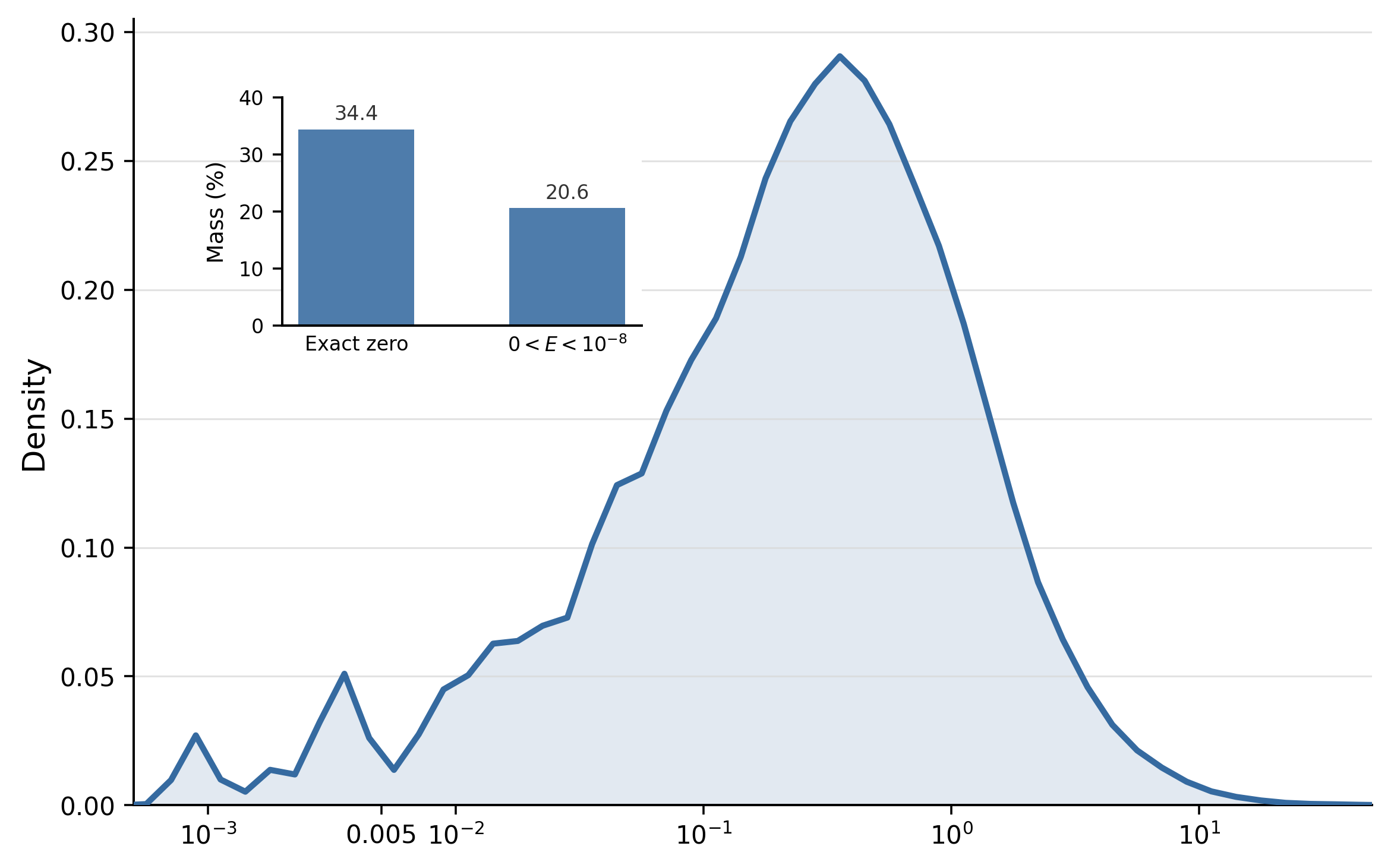}
    \caption{
    Distribution of the one-step resampling RKL-variance estimator
    $\widehat E_t^K$. A substantial fraction of positions have zero or
    near-zero estimated variance, while the nonzero values span several
    orders of magnitude and exhibit a long right tail. The inset reports
    the probability mass at exactly zero and in $(0,10^{-8})$.
    }
    \label{fig:energy_distribution}
\end{wrapfigure}

We examine the empirical distribution of the one-step resampling
RKL-variance estimator $\widehat E_t^K$. As shown in
\Cref{fig:energy_distribution}, the distribution contains substantial
mass at zero and near zero. This behavior is partly a finite-sample
effect: $\widehat E_t^K=0$ whenever the $K$ sampled candidates yield
identical reverse-KL signals, which can occur frequently for peaked
next-token distributions and finite $K$. Therefore, a zero estimated
RKL variance does not necessarily imply that the exact $E_t$ is zero.

Among nonzero estimates, $\widehat E_t^K$ spans several orders of
magnitude and exhibits a pronounced right tail. Directly using its raw
magnitude as a continuous optimization weight can therefore make the
update overly sensitive to a small number of extreme estimates. We
instead use a bounded two-level weighting rule, which preserves the
coarse distinction between positions with weaker and stronger remaining
optimization signal while limiting sensitivity to extreme variance
estimates.

\subsection{Additional Model Families}
\label{app:additional_model_families}

We further evaluate \methodname on Llama 3.2 and Gemma 3 to examine
whether its rollout efficiency gains extend beyond Qwen3. For Llama,
we use \texttt{Llama-3.2-3B} as the base student and
\texttt{Llama-3.2-3B-Instruct} as the teacher. For Gemma, we use
\texttt{gemma-3-4b-pt} as the base student and
\texttt{gemma-3-4b-it} as the teacher. Evaluations on AIME24 and AIME25
yield accuracies close to zero for these model pairs and provide limited
separation between methods. We therefore focus on AMC23 and MATH-500
for mathematical reasoning and MMLU-Redux for cross-domain evaluation.

As shown in \Cref{tab:additional_model_families}, \methodname achieves the highest accuracy on all three reported benchmarks for both model
families, outperforming sampled-token OPD, EOPD, and FiRe. Notably, \methodname uses only 50 rollout iterations, while all other methods are trained for 200 iterations. These results show that the benefits
of \methodname extend beyond Qwen3 and remain effective across different
model families.
\paragraph{Chat template and answer format.}
For each model family, we use the chat template associated with the corresponding teacher checkpoint for both training and evaluation. Across all families, we use the same boxed-answer format, where each prompt instructs the model to place the final answer in \texttt{\textbackslash boxed\{\}}. The same template and answer format are used consistently for training and evaluation.
\begin{table*}[t]
\centering
\caption{
Evaluation on additional model families.
AMC23, MATH-500, and MMLU-Redux are evaluated with Avg@16, Avg@4,
and Avg@1, respectively. \methodname is evaluated at Step 50, while all other methods are evaluated at Step 200. The highest score in each column is shown in bold.
}
\label{tab:additional_model_families}
\small
\setlength{\tabcolsep}{4pt}
\renewcommand{\arraystretch}{1.08}

\begin{tabular}{@{}lccc@{\hspace{10pt}}ccc@{}}
\toprule
&
\multicolumn{3}{c}{\textbf{Llama 3.2 3B}}
&
\multicolumn{3}{c}{\textbf{Gemma 3 4B}}
\\
\cmidrule(lr){2-4}
\cmidrule(lr){5-7}

\textbf{Method}
& \textbf{AMC23}
& \textbf{MATH-500}
& \textbf{MMLU-Redux}
& \textbf{AMC23}
& \textbf{MATH-500}
& \textbf{MMLU-Redux}
\\
\midrule

Base
& 0.3 & 0.3 & 6.3
& 0.3 & 0.5 & 11.5
\\

Sampled-Token OPD
& 1.9 & 3.4 & 12.5
& 13.4 & 34.7 & 53.4
\\

EOPD
& 1.1 & 2.4 & 6.7
& 12.8 & 33.9 & 27.1
\\

FiRe
& 1.7 & 2.6 & 9.2
& 12.5 & 33.1 & 48.6
\\

\rowcolor{oursblue}
\methodname
& \textbf{6.4}
& \textbf{8.9}
& \textbf{28.3}
& \textbf{16.9}
& \textbf{37.7}
& \textbf{54.1}
\\

\bottomrule
\end{tabular}
\vspace{-1em}
\end{table*}
\subsection{Complete Training Procedure}
\label{app:training_algorithm}

We present the complete training procedure of \methodname in
\Cref{alg:method}. The algorithm combines stabilized prefix correction,
one-step resampling from the current policy, and reweighting guided by
RKL variance with weights normalized to have mean one. Following
\Cref{eq:samplek_loss_g}, the correction weights and reverse-KL signals
are treated as constants in the stop-gradient surrogate.
In \Cref{alg:method}, we add a batch index $i$ to all per-position
quantities, writing $c_{i,t}$ for the prefix of response $i$ at
position $t$ and $A_{i,t}^{(k)}:=A_{i,t}(a_{i,t}^{(k)})$.

\begin{algorithm}[htbp]
\caption{Training procedure for \methodname}
\label{alg:method}
\begin{algorithmic}[1]

\Require Student policy $P_\theta$, teacher $P_T$, prompt dataset $\mathcal D$
\Require Inner updates $N$, resampling size $K$, prefix cap $C$,
RKL-variance threshold $\delta$, weighting coefficient $\tau$

\For{each rollout iteration}

    \State Freeze the behavior policy $P_o \leftarrow P_\theta$

    \State Sample prompts from $\mathcal D$ and generate rollout batch
    $\mathcal B$ using $P_o$

    \State Store behavior log-probabilities for all generated tokens

    \For{$n=1,\ldots,N$}

        \State Evaluate the current student and teacher on all stored
        prefixes in $\mathcal B$

        \For{each valid position $(i,t)\in\mathcal M_{\mathcal B}$}

            \State Compute the prefix correction
            $G_{i,<t}$ using \Cref{eq:gspo_prefix},
            with $G_{i,<1}=1$

            \State Sample
            \[
            a_{i,t}^{(1)},\ldots,a_{i,t}^{(K)}
            \overset{\mathrm{i.i.d.}}{\sim}
            P_\theta(\cdot\mid c_{i,t})
            \]

            \State Compute the token-level reverse-KL signals
            $A_{i,t}^{(k)}=\log P_\theta(a_{i,t}^{(k)}\mid c_{i,t})-\log P_T(a_{i,t}^{(k)}\mid c_{i,t})$ for $k=1,\ldots,K$

            \State Estimate the local RKL variance
            $\widehat E_{i,t}^{K}$ using \Cref{eq:sampleK_rkl_variance}

            \State Compute the raw RKL-variance weight
            $\widehat w_{i,t}$ using \Cref{eq:rkl_variance_reweighting}

        \EndFor

        \State Normalize weights over valid positions
        \[
        w_{i,t}
        =
        \frac{\widehat w_{i,t}}
        {
        \frac{1}{|\mathcal M_{\mathcal B}|}
        \sum_{(j,u)\in\mathcal M_{\mathcal B}}
        \widehat w_{j,u}
        }
        \]

        \State Construct the reweighted resampling objective
        \[
        \widehat{\mathcal L}
        =
        \frac{1}{|\mathcal M_{\mathcal B}|}
        \sum_{(i,t)\in\mathcal M_{\mathcal B}}
        \frac{1}{K}
        \sum_{k=1}^{K}
        \operatorname{sg}\!\left[
        G_{i,<t}\,
        w_{i,t}\,
        A_{i,t}^{(k)}
        \right]
        \log
        P_\theta
        \left(
        a_{i,t}^{(k)}
        \mid c_{i,t}
        \right)
        \]

        \State Update
        \[
        \theta
        \leftarrow
        \theta
        -
        \eta
        \nabla_\theta
        \widehat{\mathcal L}
        \]

    \EndFor

\EndFor

\end{algorithmic}
\end{algorithm}
\subsection{One-Step Resampling in OPD}
\label{app:onpolicy_resampling}

We further isolate the effect of one-step resampling in OPD without
rollout reuse. Each training iteration uses freshly generated student
rollouts, so there is no policy staleness from repeated off-policy
updates.

Under the usual rollout setting with $T=1$ and top-$p=1$, rollout
tokens are sampled directly from the student distribution defining the
reverse-KL objective. In this case, one-step resampling mainly improves
the finite-sample conditional gradient estimate by averaging multiple
current-policy samples.

We also generate training rollouts using the evaluation sampling
setting, $T=0.7$ and top-$p=0.95$. This changes the rollout-token
distribution while leaving the reverse-KL objective unchanged. As shown
in \Cref{fig:onpolicy_resampling}, sampled-token OPD degrades
substantially under this altered sampling distribution, while one-step
resampling largely preserves the performance obtained with $T=1$ and
top-$p=1$.

These results show two effects of one-step resampling in OPD. It
improves finite-sample gradient estimation when rollout tokens are
sampled directly from the student distribution, and it additionally
corrects current-token mismatch when the rollout sampling
distribution is altered. Under rollout reuse, the same resampling step
addresses current-token mismatch caused by the student moving away from
the behavior policy, while prefix correction separately accounts for
the shift in stored prefixes.

\begin{wrapfigure}[25]{r}{0.44\textwidth}
\centering
\vspace{-4em}
\includegraphics[width=\linewidth]{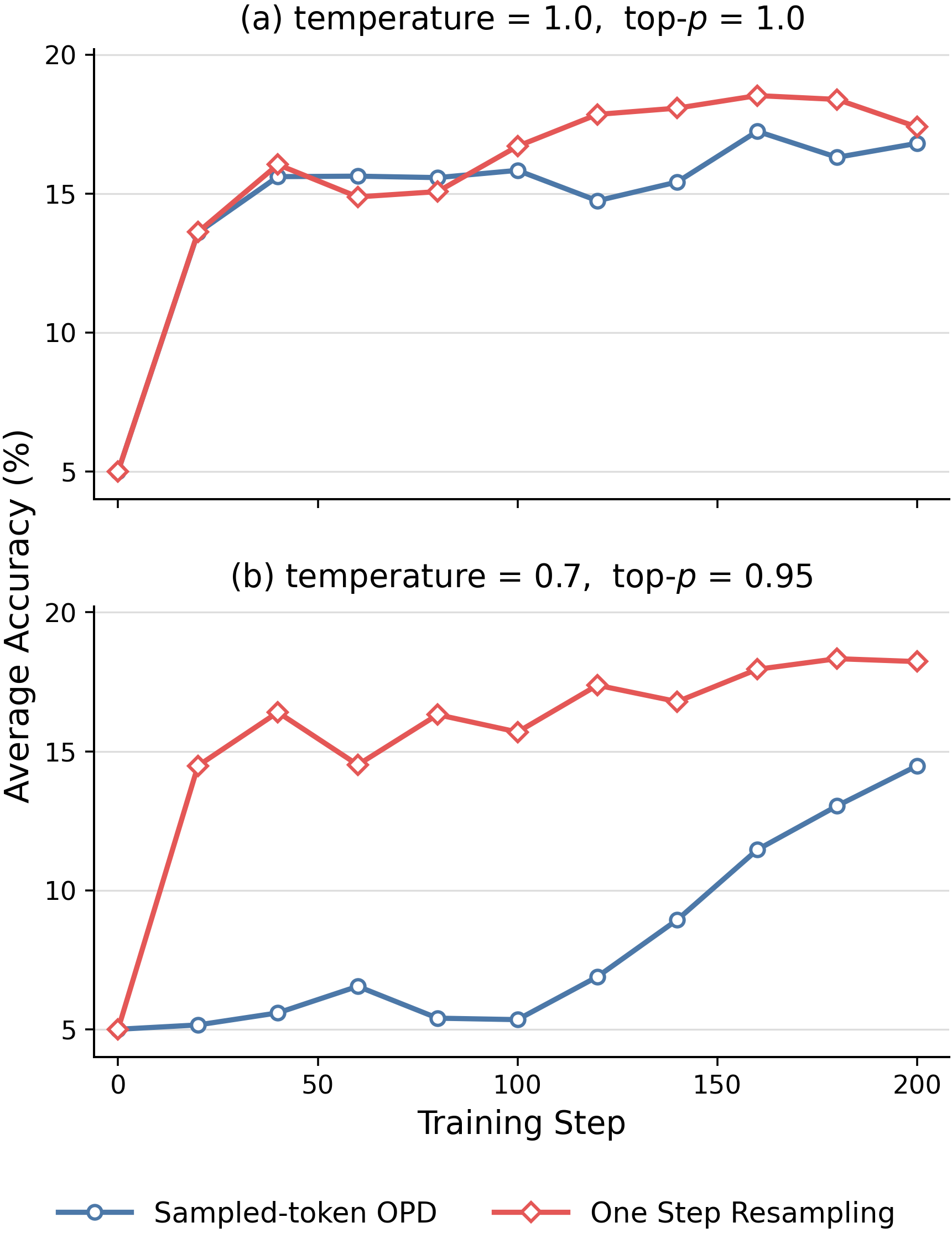}
\caption{
One-step resampling in OPD without rollout reuse.
\textbf{(a)} Rollouts use $T=1$, top-$p=1$.
\textbf{(b)} Rollouts use the evaluation setting $T=0.7$, top-$p=0.95$.
Resampling gives a modest gain in (a) and substantially mitigates the
degradation in (b). Results are Avg@16 over AIME24, AIME25, and AMC23.
}
\label{fig:onpolicy_resampling}
\end{wrapfigure}

\section{Experimental Details and Ablations}
\label{app:experimental_details}
\subsection{Implementation Details}
\label{app:implementation_details}
We provide the implementation details of \methodname in this section. All experiments are conducted on $4\times$ NVIDIA A100-SXM4-80GB GPUs. We use the DAPO-Math-17K dataset and convert the prompts to the boxed template. The same prompt template is used for evaluation to ensure consistency between training and evaluation. Our codebase builds on the implementation of \citet{rethinkopd}, and we apply the EOS-alignment method of \citet{yang2026eostokensdisagreeunderstanding} to address termination-token mismatch for all methods in this paper, including the baselines. We summarize the main implementation choices and hyperparameters in
\Cref{tab:implementation_details}. Unless otherwise specified, all experiments use the same configuration. We set the prefix cap to $C=4$
only as a numerical safeguard against rare extreme likelihood ratios.

\paragraph{Ablation details.}
We provide additional implementation details for the alternative
correction and weighting rules evaluated in \Cref{tab:ablation}.

\paragraph{PPO-style clipped importance sampling.}
As an alternative current-token correction, we use the behavior policy
that generated each rollout as $P_{\mathrm{o}}$ and form the token
importance ratio
\[
r_t(\theta)
=
\frac{P_\theta(a_t\mid c_t)}
     {P_{\mathrm{o}}(a_t\mid c_t)}.
\]
We apply a dual-clipped PPO surrogate to the same token-level reverse-KL signal used by OPD, with clipping interval $[0.8,1.2]$ and dual-clip constant $c_{\mathrm{dual}}=3$ applied to positive reverse-KL signals, which correspond to negative advantages under the PPO convention. This baseline corrects only the current-token ratio and does not use prefix
correction.

\paragraph{Alternative RKL-variance weighting.}
For the $\sqrt{\widehat E^K}$ variant, we directly use
$\sqrt{\widehat E_t^{K}}$ as the token weight and normalize the weights to
have mean one over valid positions, without additional clipping. For
the SNR-inspired saturating rule
$\widehat E_t^{K}/(\widehat E_t^{K}+c)$, we use
$c=0.25$. This value approximately matches the median
of the mean RKL variance across training iterations in the default run.
In comparison, $c=1$ is poorly matched to the observed
scale because most values of $\widehat E_t^{K}$ are substantially smaller
than one.
\begin{table}[t]
\centering
\caption{
Implementation details and hyperparameters used in the main experiments.
}
\label{tab:implementation_details}
\small
\setlength{\tabcolsep}{5pt}
\renewcommand{\arraystretch}{1.08}
\begin{tabular}{lll}
\toprule
\textbf{Category} & \textbf{Hyperparameter} & \textbf{Setting} \\
\midrule

\multirow{6}{*}{Method}
& Candidate samples $K$ & 16 \\
& RKL-variance threshold $\delta$ & 0.005 \\
& Reweighting coefficient $\tau$ & 0.75 \\
& Weight normalization & Mean 1 over valid tokens \\
& Prefix-ratio cap $C$ & 4 \\
\midrule

\multirow{12}{*}{Training}
& Training data & DAPO-Math-17K  \\
& Data order & Sequential (\texttt{shuffle=False}) \\
& Outer training steps & 50 \\
& Prompts per step & 8 \\
& Responses per prompt & 4 \\
& Inner updates per rollout & 10 \\
& Maximum model/token length & 8,192 \\
& Student rollout sampling & $\text{temperature}=1.0$, top-$p=1.0$ \\
& Teacher temperature & 1.0 \\
& Optimizer & AdamW \\
& Learning rate & $1\times10^{-6}$ \\
& Weight decay & 0.01 \\
& Gradient clipping & 1.0 \\
& Precision & FP32 \\
& Thinking mode & Disabled \\
\midrule

\multirow{4}{*}{Evaluation}
& Temperature & 0.7 \\
& Top-$p$ & 0.95 \\
& Maximum new tokens & 8,192 \\
& Prompt template & Boxed answer \\
\midrule

\multirow{3}{*}{Compute}
& GPUs & $4\times$ NVIDIA A100-SXM4-80GB \\
& FSDP world size & 4 \\
& Tensor parallel size & 1 \\
\bottomrule
\end{tabular}
\end{table}

\subsection{Prompt Template}
\label{app:prompt_template}

We use the following user prompt across all model families for both training and evaluation. For each family, it is formatted using the chat template associated with the corresponding teacher.

\begin{verbatim}
{problem} Please reason step by step, and put your final answer
within \boxed{}.
\end{verbatim}
\subsection{Hyperparameter Validation}
\label{app:hyperparameter_validation}

We further evaluate the sensitivity of \methodname to the
hyperparameters used for RKL-variance reweighting. Experiments are
conducted with a Qwen3-1.7B Base student distilled from a Qwen3-4B
teacher. We vary the RKL-variance threshold $\delta$ and the
reweighting coefficient $\tau$, and report Avg@16 over AIME24, AIME25,
and AMC23.

As shown in \Cref{tab:fisher_hyperparameter_validation}, \methodname
remains effective across a range of hyperparameter choices. Increasing
$\tau$ from $0.6$ to moderate values of $0.7$ and $0.75$ improves
performance, while increasing it further to $0.8$ leads to some
degradation. This suggests that overly aggressive prioritization can
concentrate optimization too strongly on the high-variance positions.
Performance is also relatively robust to the RKL-variance threshold
$\delta$. The default value $\delta=0.005$ achieves the best result
among the tested settings, while substantially different thresholds
remain competitive.

We further ablate the number of candidate tokens $K$ used for one-step
resampling. To isolate its effect, we disable the other components of
\methodname and vary only $K$. Results are shown in
\Cref{tab:sample_k_ablation}.

\begin{wraptable}[8]{r}{0.44\textwidth}
\centering
\caption{
Effect of the resampling size $K$ for a Qwen3-1.7B Base student
distilled from a Qwen3-4B teacher, evaluated with Avg@16 over AIME24,
AIME25, and AMC23, with other \methodname components disabled.
}
\label{tab:sample_k_ablation}
\small
\setlength{\tabcolsep}{7pt}
\begin{tabular}{c|cccc}
\toprule
$K$ & 1 & 4 & 16 & 64 \\
\midrule
Avg@16 & 17.6 & 17.8 & 17.8 & 18.1 \\
\bottomrule
\end{tabular}
\end{wraptable}

Performance improves slightly as $K$ increases, from 17.6 at $K=1$ to
18.1 at $K=64$, while remaining stable across the intermediate choices.
This indicates that one-step resampling is not highly sensitive to $K$,
with larger sample sizes providing diminishing gains.

\begin{table}[t]
\centering
\caption{
Hyperparameter sensitivity of RKL-variance reweighting for a
Qwen3-1.7B Base student distilled from a Qwen3-4B teacher.
Results are Avg@16 over AIME24, AIME25, and AMC23.
$\dagger$ denotes the default configuration.
}
\label{tab:fisher_hyperparameter_validation}
\small
\setlength{\tabcolsep}{9pt}
\renewcommand{\arraystretch}{1.08}

\begin{tabular}{cc@{\hspace{1.6cm}}cc}
\toprule
\multicolumn{2}{c}{\textbf{Varying $\tau$} ($\delta=0.005$)}
&
\multicolumn{2}{c}{\textbf{Varying $\delta$} ($\tau=0.75$)}
\\
\cmidrule(lr){1-2}
\cmidrule(lr){3-4}

$\tau$ & \textbf{Avg@16}
&
$\delta$ & \textbf{Avg@16}
\\
\midrule

0.60 & 18.8
&
0.001 & 19.3
\\

0.70 & \textbf{19.7}
&
0.005$^\dagger$ & \textbf{19.6}
\\

0.75$^\dagger$ & 19.6
&
0.050 & 18.7
\\

0.80 & 19.1
&
0.200 & 18.9
\\

\bottomrule
\end{tabular}
\end{table}
\subsection{Alternative Prioritization Signals}
\label{app:alternative_signal}

We compare RKL variance with two alternative signals computed from the same $K$ candidate tokens used for one-step resampling: sampled KL, $\bar A_t^K=\frac{1}{K}\sum_k A_t^{(k)}$, and sampled student entropy, $\widehat H_t=-\frac{1}{K}\sum_k \log p_t(a_t^{(k)})$.

Because the three signals have different numerical scales, a shared threshold would not give a fair comparison. We therefore use signal-specific thresholds chosen so that each signal assigns the higher weight to approximately the same fraction of positions as the default $\delta=0.005$ of \methodname, which is roughly one third. For each alternative signal, we first use a pilot run to estimate a threshold that matches this fraction, and then fix this threshold throughout the full training run. This gives thresholds of $0.02$ for sampled KL and $0.27$ for sampled student entropy. During full training, the resulting fractions of positions receiving the higher weight are close to one third for all three signals (\Cref{tab:priority_signal_ablation}). All other components, including the weighting coefficient and weight normalization, are unchanged. We use a Qwen3-1.7B Base student distilled from a Qwen3-4B teacher.

\begin{table}[t]
\centering
\caption{
Comparison of token-level prioritization signals for a Qwen3-1.7B Base student distilled from a Qwen3-4B teacher. The high-weight fraction is the fraction of valid token positions assigned weight $\tau$ before normalization during training. Results are Avg@16 over AIME24, AIME25, and AMC23.
}
\label{tab:priority_signal_ablation}
\small
\begin{tabular}{lccc}
\toprule
Prioritization signal & Threshold & High-weight fraction & Avg@16 \\
\midrule
Uniform weighting & -- & -- & 18.2 \\
Sampled KL & 0.02 & 34.0\% & 17.3 \\
Sampled entropy & 0.27 & 32.6\% & 16.5 \\
RKL variance (\methodname) & 0.005 & 32.8\% & \textbf{19.6} \\
\bottomrule
\end{tabular}
\end{table}

As shown in \Cref{tab:priority_signal_ablation}, RKL variance achieves the highest accuracy, while both alternative signals perform worse than uniform weighting despite nearly matched high-weight fractions. Sampled KL measures the average log probability ratio between the student and teacher, while student entropy depends only on the student distribution. RKL variance instead measures variation in this log ratio across candidate tokens, capturing the centered signal that contributes
to the local reverse KL gradient.

\subsection{Wall-Clock Time}
\label{app:wall_clock}

We further report the wall-clock efficiency of \methodname using Qwen3-1.7B Base as the student and Qwen3-4B as the teacher. On $4\times$ NVIDIA A100-SXM4-80GB GPUs, training \methodname for 50 steps takes approximately 5.5 hours, whereas training standard OPD for 200 steps takes approximately 10 hours. As shown in \Cref{tab:main_scaling_results}, \methodname substantially outperforms OPD under the same 50-step rollout budget and also surpasses OPD trained for 200 steps, demonstrating improved performance together with lower wall-clock cost.

We further profile the per-step runtime of \methodname in
\Cref{tab:time_cost}. Even with 10 learner updates per rollout batch, student rollout generation remains the largest single component at 32.4\% of the step. The additional updates add scoring and optimization cost, but each of them reuses the rollout rather than paying for autoregressive generation again. For comparison, a standard OPD step consists only of the rollout, one scoring pass, and one actor update, so generation accounts for roughly 70\% of its wall-clock time. The breakdown makes the trade-off of \methodname explicit: learner-side computation is spent to extract more optimization progress from each generated rollout, and the share of time spent generating drops from about 70\% to about 32\%.

\begin{table}[t]
\centering
\caption{
Per-step wall-clock breakdown of \methodname with 10 learner updates
per rollout batch (U10), measured using a Qwen3-1.7B Base student and
Qwen3-4B teacher on $4\times$ NVIDIA A100-SXM4-80GB GPUs.
}
\label{tab:time_cost}
\small
\setlength{\tabcolsep}{5pt}
\renewcommand{\arraystretch}{1.05}
\begin{tabular}{lrr}
\toprule
\textbf{Component} & \textbf{Time (s)} & \textbf{Share} \\
\midrule
Student rollout                                & 128.7 & 32.4\% \\
Initial student/teacher scoring                &  15.5 &  3.9\% \\
9 additional student log-prob evaluations      &  52.1 & 13.1\% \\
9 additional teacher scoring passes            &  33.8 &  8.5\% \\
9 additional resampling signal constructions ($K=16$) &  14.2 &  3.6\% \\
10 actor forward/backward/update passes        & 100.0 & 25.2\% \\
Data movement, synchronization, and scheduling &  52.5 & 13.2\% \\
\midrule
\textbf{Total} & \textbf{396.7} & \textbf{100\%} \\
\bottomrule
\end{tabular}
\end{table}

\subsection{Effect of Rollout Batch Size}
\label{app:opd_batch_size}

We further examine whether increasing the rollout batch size improves OPD. We increase the number of prompts per iteration from 8 to 16 and 32 and train each configuration for 200 rollout iterations. Relative rollout budget is normalized to the default \methodname setting, which uses 8 prompts per iteration for 50 rollout iterations.

\begin{table}[t]
\centering
\caption{
Effect of rollout batch size for OPD with a Qwen3-1.7B Base student distilled from a Qwen3-4B teacher. Relative rollout budget is normalized to the default \methodname setting. Results are Avg@16 over AIME24, AIME25, and AMC23.
}
\label{tab:opd_batch_size}
\small
\begin{tabular}{lcccc}
\toprule
Method & Rollout iterations & Prompts per iteration & Relative rollout budget & Avg@16 \\
\midrule
\methodname & 50  & 8  & $1\times$  & \textbf{19.6} \\
OPD & 200 & 8  & $4\times$  & 16.8 \\
OPD & 200 & 16 & $8\times$  & 17.5 \\
OPD & 200 & 32 & $16\times$ & 18.4 \\
\bottomrule
\end{tabular}
\end{table}

As shown in \Cref{tab:opd_batch_size}, increasing the rollout batch size steadily improves OPD. However, even with a $16\times$ relative rollout budget, OPD remains below \methodname trained with the default rollout budget.

\end{document}